\documentclass[11pt]{article}
\usepackage{fullpage}
\usepackage{comment}
\usepackage{mathtools,amsfonts,amsthm,amssymb,bbm}
\usepackage{xcolor}
\usepackage[backref,colorlinks,citecolor=blue,linkcolor=magenta,bookmarks=true]{hyperref}
\usepackage[nameinlink]{cleveref}
\usepackage{enumerate}
\usepackage{subfigure}
\usepackage{algorithm}
\usepackage{algpseudocode}
\usepackage{enumitem}
\usepackage{thm-restate}
\usepackage[algo2e,ruled,linesnumbered]{algorithm2e}

\usepackage{color-edits}
\addauthor{mq}{blue}
\addauthor{ll}{orange}

\title{Online Prediction with Limited Selectivity}

\author{
	Licheng Liu\thanks{Imperial College London. Email: \texttt{licheng.liu22@imperial.ac.uk}.}
	\and
	Mingda Qiao\thanks{University of Massachusetts Amherst. Email: \texttt{mqiao@umass.edu}.}
}
\date{}

\newcommand{\1}[1]{\mathbbm{1}\left[#1\right]}
\newcommand{\Bern}{\mathsf{Bernoulli}}
\newcommand{\Bin}{\mathsf{Binomial}}
\newcommand{\calA}{\mathcal{A}} 
\newcommand{\calL}{\mathcal{L}} 
\newcommand{\calT}{\mathcal{T}} 
\newcommand{\cnt}{\mathrm{cnt}}
\newcommand{\eps}{\epsilon}
\newcommand{\err}{\mathrm{error}} 
\newcommand{\errw}{\err^{\text{worst}}} 
\newcommand{\event}{\mathcal{E}}
\newcommand{\Ex}[2]{\operatorname*{\mathbb{E}}_{#1}\left[#2\right]}
\newcommand{\ind}{\mathcal{I}}
\newcommand{\istar}{i^\star}
\newcommand{\mprime}{\widetilde{U}} 
\newcommand{\pmin}{p_{\text{min}}}

\newcommand{\polylog}{\mathrm{polylog}}
\newcommand{\pr}[2]{\Pr_{#1}\left[#2\right]}
\newcommand{\pstar}{p^\star}
\newcommand{\RandomSelect}{\mathsf{RandomSelect}}
\newcommand{\size}{\mathrm{size}}
\newcommand{\totlen}{\mathrm{totlen}}
\newcommand{\Var}[2]{\operatorname*{Var}_{#1}\left[#2\right]}

\newtheorem{theorem}{Theorem}
\newtheorem{lemma}[theorem]{Lemma}
\newtheorem{proposition}[theorem]{Proposition}

\newtheorem{definition}{Definition}

\crefname{algocf}{algorithm}{algorithms}
\Crefname{algocf}{Algorithm}{Algorithms}
\begin{document}

\maketitle

\begin{abstract}
Selective prediction~\cite{Drucker13,QV19} models the scenario where a forecaster freely decides on the prediction window that their forecast spans. Many data statistics can be predicted to a non-trivial error rate \emph{without any} distributional assumptions or expert advice, yet these results rely on that the forecaster may predict at any time. We introduce a model of Prediction with Limited Selectivity (PLS) where the forecaster can start the prediction only on a subset of the time horizon. We study the optimal prediction error both on an instance-by-instance basis and via an average-case analysis. We introduce a complexity measure that gives instance-dependent bounds on the optimal error. For a randomly-generated PLS instance, these bounds match with high probability.
\end{abstract}

\section{Introduction}\label{sec:introduction}
In \emph{selective prediction}~\cite{Drucker13,QV19}, a forecaster observes $n$ numbers in $[0, 1]$ one by one. At any time $t$, having observed the first $t$ numbers, the forecaster may predict the average of the next $w \le n - t$ unseen numbers. Both the stopping time $t$ and the window length $w$ are freely chosen by the forecaster, and only one such prediction needs to be made. The goal is to minimize the expected prediction error---the expected squared difference between the forecast and the actual average. How small can this error be?

Surprisingly, Drucker~\cite{Drucker13} showed that the forecaster can guarantee an error that vanishes as $n \to +\infty$, even though the sequence might be arbitrarily and adversarially chosen. Specifically, this result holds without any distributional assumption on the sequence (other than boundedness), and is thus robust to any misspecification, non-stationarity, or adversarial corruption in the data.  Moreover, it directly addresses the prediction error, rather than a regret with respect to a class of experts.

In this paper, we study a variant of the selective prediction model where the forecaster only has \emph{limited selectivity}. Concretely, the forecaster is given a subset of the time horizon, which specifies the timesteps on which they are allowed to make a prediction. This model captures many natural scenarios where predictions are either infeasible or unnecessary during certain time periods. For example, people tend to care about weather forecasts primarily when they have plans for outdoor activities. An investor may be restricted from trading specific commodities during particular seasons, making market predictions less relevant at those times. Similarly, epidemic forecasts are most critical before and during pandemics.

The main contributions of this work are summarized as follows:
\begin{itemize}[leftmargin=*]
    \item We introduce a theoretical model of Prediction with Limited Selectivity (PLS), which generalizes the selective prediction models of~\cite{Drucker13,QV19}.
    \item We define a complexity measure termed ``approximate uniformity'', which gives instance-dependent bounds on the optimal error that a forecasting algorithm can achieve on a PLS instance.
    \item For PLS instances that are randomly generated according to a $k$-monotone sequence (\Cref{def:random-instance}), we show that the instance-dependent bounds match up to a constant factor with high probability.
\end{itemize}

\subsection{Problem Formulation}
\label{sec:setup}

\begin{definition}[Prediction with limited selectivity]\label{def:PLS}
    The forecaster is given $n$ and a stopping time set $\calT \subseteq \{0, 1, 2, \ldots, n-1\}$, and the nature secretly chooses a sequence $x \in [0,1]^n$. At each timestep $t = 1, 2, \ldots, n$, the forecaster observes $x_t$. At any timestep $t \in \calT$, after seeing $x_1, \ldots, x_t$, the forecaster may optionally choose a window length $w \in \{1, 2, \ldots, n-t\}$ and make a prediction $\hat\mu$ on the average $\mu = \frac{1}{w}\sum_{i=1}^{w} x_{t+i}$. Once a prediction is made, the game ends and the forecaster incurs an error of $(\hat{\mu} - \mu)^2$. The forecaster must make one prediction before all $n$ numbers are revealed.
\end{definition}

An \emph{instance} of PLS consists of a sequence length $n$ and a stopping time set $\calT$. When the forecaster is fully-selective (namely, $\calT = \{0, 1, 2, \ldots, n - 1\}$), PLS recovers the selective mean prediction setup of~\cite{Drucker13,QV19}. For simplicity, we focus on the special case of mean prediction, which already captures most of the interesting aspects of PLS. Nevertheless, the problem setup and our results can be easily extended to the more general prediction settings in~\cite{QV19}; see \Cref{sec:discussion} for more details.


Following prior work on selective prediction, we measure the performance of a forecaster using its \emph{worst-case error} over all possible choices of the sequence $x \in [0, 1]^n$.

\begin{definition}
\label{def:worst-case-error}
    The worst-case error of algorithm $\calA$ is $\errw(\calA) \coloneqq \sup_{x \in [0, 1]^n}\err(\calA, x)$, where $\err(\calA, x)$ denotes the expected squared error that $\calA$ incurs on sequence $x \in [0, 1]^n$.
\end{definition}

We will frequently use the following equivalent yet more convenient representation for a PLS instance.

\begin{definition}
\label{def:block-rep}
The block representation of a PLS instance with sequence length $n$ and stopping time set $\calT$ is
\[
    \calL = (l_1, l_2, \ldots, l_m) = (t_2 - t_1, t_3 - t_2, \ldots, t_m - t_{m-1}, n - t_m),
\]
where $m = |\calT|$ and $t_1 < t_2 < \cdots < t_m$ are the elements of $\calT$ in ascending order.
\end{definition}

In the rest of the paper, we will use the stopping time set $\calT$ and the block representation $\calL$ interchangeably to represent a PLS instance. Intuitively, each \emph{block length} $l_i$ corresponds to $l_i$ consecutive timesteps $t_i + 1, t_i + 2, \ldots, t_i + l_i = t_{i+1}$ between adjacent stopping times in $\calT$. During these timesteps, the forecaster observes new data but cannot make predictions.

The fully-selective setup of~\cite{Drucker13,QV19} corresponds to the block representation $\calL$ that is an all-one sequence. When $\calL$ consists of block lengths of various magnitudes, we naturally expect that the PLS instance becomes harder in the sense that the optimal forecaster has a higher worst-case error. We will formalize this intuition and derive \emph{instance-dependent} error bounds for every PLS instance $\calL$ in terms of a simple and combinatorial complexity measure of $\calL$.

In addition to the instance-dependent analysis, we will also study settings where the stopping time set $\calT$ is randomly generated. Concretely, we consider a setup where each timestep between $0$ and $n - 1$ gets included in the stopping time set $\calT$ independently, possibly with different probabilities.

\begin{definition}[Random stopping time set]
\label{def:random-instance}
    For integer $n \ge 1$ and $(\pstar_0, \pstar_1, \ldots, \pstar_{n-1}) \in [0, 1]^n$, a $\pstar$-random stopping time set $\calT$ is a random subset of $\{0, 1, \ldots, n - 1\}$ obtained by independently including each element $t$ with probability $\pstar_t$.
\end{definition}

\subsection{Our Results}
\label{sec:results}
We prove both upper and lower bounds on the optimal worst-case error in PLS, both on an instance-by-instance basis and via an average-case analysis.

\paragraph{Approximate uniformity.} We introduce a complexity measure, termed \emph{approximate uniformity}, that captures the hardness of a PLS instance.

\begin{definition}
\label{def:approximate-uniformity}
    The approximate uniformity of PLS instance $\calL = (l_1, l_2, \ldots, l_m)$ is
    \[
        \mprime(\calL) \coloneqq \max_{1 \leq i \leq j \leq m} \frac{l_i + l_{i+1} + \cdots + l_j}{\max\{l_i, l_{i+1}, \ldots, l_j\}}.
    \]
\end{definition}

For the fully-selective case that $\calL = (1, 1, \ldots, 1)$, we have $\mprime(\calL) = m$. More generally, $\mprime(\calL)$ captures the ``effective horizon length'' in instance $\calL$. As we show in the following, the approximate uniformity roughly characterizes the lowest worst-case error that a forecasting algorithm can achieve.
 
\paragraph{Instance-dependent error bounds.} Our main algorithmic contribution is a forecasting algorithm with a worst-case error upper bounded in terms of $\mprime(\calL)$. This generalizes the $O(1/\log n)$ error bound of~\cite{Drucker13} for the fully-selective case.

\begin{theorem}\label{thm:upper}
    For every PLS instance $\calL$, there is a forecasting algorithm with a worst-case error of $O(1/\log\mprime(\calL))$.
\end{theorem}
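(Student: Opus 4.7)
The plan is to reduce a general PLS instance to Drucker's dyadic-scale algorithm~\cite{Drucker13} applied to a coarsened, nearly-uniform sub-instance. Since any prediction has squared error at most $1$, the bound $O(1/\log \mprime(\calL))$ is trivial when $M \coloneqq \mprime(\calL)$ is bounded by a constant, so I focus on large $M$.

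\emph{Coarsening.} Fix indices $\istar \le j^\star$ realizing the maximum in \Cref{def:approximate-uniformity}, and let $L \coloneqq \max_{\istar \le k \le j^\star} l_k$. The blocks $l_{\istar}, \ldots, l_{j^\star}$ have total length $ML$, each of length at most $L$. I would greedily bundle these blocks into super-blocks $B_1, \ldots, B_K$ by starting a new super-block whenever the running length first reaches $L$; since every block is at most $L$, each super-block has time span in $[L, 2L)$, and merging any short tail with its predecessor yields span in $[L, 3L)$. This gives $K \in [M/3, M]$, and every super-block boundary is an original block boundary, hence an element of $\calT$.

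\emph{Algorithm.} The forecaster samples a scale $s$ uniformly from $\{1, 2, 4, \ldots, 2^{\lfloor \log_2(K/2) \rfloor}\}$ and a starting index $a$ uniformly from $\{0, 1, \ldots, K - 2s\}$. At the stopping time ending $B_{a+s}$, it predicts the mean of the window spanning $B_{a+s+1}, \ldots, B_{a+2s}$, using as its estimate the observed mean of $B_{a+1}, \ldots, B_{a+s}$. Writing $f_k(a)$ for the time-weighted mean of $x$ over super-blocks $B_{a+1}, \ldots, B_{a+2^k}$, the expected squared error equals $\Ex{k, a}{(f_k(a) - f_k(a + 2^k))^2}$, and I would bound this by $O(1/\log K) = O(1/\log M)$.

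\emph{Main obstacle.} The crux is a weighted version of Drucker's variance/telescoping identity. Applied with $A = f_k(a)$, $B = f_k(a + 2^k)$ and weights $w_1, w_2$ equal to the super-block total lengths of the two halves, the algebraic identity $(A - B)^2 = \frac{w_1 + w_2}{w_2} A^2 + \frac{w_1 + w_2}{w_1} B^2 - \frac{(w_1 + w_2)^2}{w_1 w_2} f_{k+1}(a)^2$ yields, after summation over $a$, a relation of the form $\sum_a (A_a - B_a)^2 = \sum_b c_k(b) f_k(b)^2 - \sum_a d_k(a) f_{k+1}(a)^2$ with coefficients $c_k, d_k = \Theta(1)$. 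In the uniform case these coefficients coincide and the sum over $k$ telescopes exactly; here, with weights only in $[L, 3L]$, the coefficients are merely constant-factor close, and extra care is needed to derive a telescoping bound $\sum_k \sum_a (A_a - B_a)^2 = O(K)$. I expect this to work by explicitly tracking the bounded weight ratios, or by reformulating the identity via a weighted inner product on the super-block means. Dividing the resulting $O(K)$ sum by $K \log K$ then yields the $O(1/\log M)$ bound.
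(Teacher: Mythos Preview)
Your coarsening step is correct and essentially identical to the paper's: they too fix the window $[\istar,j^\star]$ realizing $\mprime(\calL)$ and greedily merge into $\Theta(M)$ super-blocks whose lengths lie within a constant factor.

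The gap is in the analysis on the coarsened instance. With unequal super-block lengths the coefficients in your identity $(A-B)^2 = c_1A^2 + c_2B^2 - c_3C^2$ depend on the position $a$; for lengths in $[L,2L)$ one gets $c_1,c_2\in[3/2,3]$ and $c_3\in[4,9/2]$. Summing over $a$ yields at best $\sum_a(A-B)^2 \le 2(1+C)\,T_k - 4\,T_{k+1}$ with $T_k=\sum_b f_k(b)^2$, and this telescopes only when $C=1$; for $C>1$ the per-level residual $[2(1+C)-4]\cdot T_k = \Theta(K)$ accumulates over $\Theta(\log K)$ levels to $\Theta(K\log K)$, giving only the trivial $O(1)$ error bound. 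The same obstruction appears with the $\phi(\mu)=\mu(1-\mu)$ potential and with length-weighted potentials, so neither of your suggested fixes (``tracking bounded ratios'' or a ``weighted inner product'') rescues the uniform-shift algorithm as written.

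The paper sidesteps this by changing the \emph{algorithm} rather than the potential: instead of a uniform random shift, it recursively halves a single dyadic block of $2^k$ super-blocks, predicting the right half from the left with probability $1/k$ and otherwise recursing into one half with probability \emph{proportional to that half's total time length}. This length-proportional branching is exactly what makes the weighted quadratic identity close up, yielding the clean induction $L(k,\mu)\le \frac{(C+1)^2}{C}\cdot\frac{\mu(1-\mu)}{k}$. A different repair of your sliding-window scheme is also possible---predict the past \emph{sum} divided by the known \emph{future} window length (clipped to $[0,1]$), so that the squared error is bounded by $(h_k(a)-h_k(a+2^k))^2$ with $h_k(a)\coloneqq S(a,k)/(2^kL)$, for which the unweighted telescoping applies verbatim---but that is not the estimator you proposed.
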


We prove \Cref{thm:upper} in two steps. First, we establish an $O(1/\log m)$ upper bound for the special case that the $m$ block lengths are \emph{approximately uniform} in the sense of being within a constant factor. Then, we reduce a general PLS instance $\calL$ to the special case by merging the blocks into longer blocks with approximately uniform lengths. We show that at least $\Omega(\mprime(\calL))$ longer blocks can be obtained in this way, so the result for the special case implies the $O(1/\log\mprime(\calL))$ upper bound.

Complementary to \Cref{thm:upper}, we give two lower bounds on the worst-case error.
\begin{theorem}\label{thm:lower}
    For every PLS instance $\calL = (l_1, l_2, \ldots, l_m)$, every forecasting algorithm has a worst-case error of $\Omega(\max\{1/[\mprime(\calL)]^2, 1/\log m\})$.
\end{theorem}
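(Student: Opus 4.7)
The claimed bound is the maximum of two terms, and I prove each separately. Write $U \coloneqq \mprime(\calL)$, and let $k(t) \in \{1, \ldots, m\}$ denote the index of the block containing position $t$.

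For the $\Omega(1/U^2)$ term, the plan is to exhibit a block-randomized hard distribution: sample $\sigma_1, \ldots, \sigma_m \in \{-1, +1\}$ independently and uniformly at random, and set $x_t = 1/2 + (1/2)\sigma_{k(t)}$, so that $x \in \{0, 1\}^n \subseteq [0, 1]^n$ is constant within each block. For any deterministic algorithm, I lower-bound its expected squared error under this distribution; the same bound then holds in the worst case. Suppose, on some realization, the algorithm stops at time $t_k$ and chooses window length $w$, so that the window covers blocks $k, k+1, \ldots, k' - 1$ fully and a prefix of length $w' \in [0, l_{k'})$ of block $k'$ (with $w' = 0$ when the window ends at a stopping time). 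Conditioned on the observed prefix $x_1, \ldots, x_{t_k}$, which reveals exactly $\sigma_1, \ldots, \sigma_{k-1}$, the signs $\sigma_k, \ldots, \sigma_m$ remain uniform in $\{-1, +1\}$, so the optimal prediction is $\hat\mu = 1/2$ and the expected squared error equals the conditional variance
\begin{equation*}
    \frac{1}{4 w^2}\left(\sum_{i=k}^{k'-1} l_i^2 + (w')^2\right) \;\ge\; \frac{\left(\max_i b_i\right)^2}{4 \left(\sum_i b_i\right)^2},
\end{equation*}
where $b \coloneqq (l_k, \ldots, l_{k'-1}, w')$ lists the (possibly partial) block lengths in the window.

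The main technical step is to show that $\sum_i b_i \le O(U) \cdot \max_i b_i$ for every valid $(k, k', w')$. If $w' \le \max_{k \le i < k'} l_i$, the maximum of $b$ is attained by a full block, and applying the definition of $\mprime$ to the contiguous sub-range $\{k, \ldots, k'-1\}$ gives $\sum_{i<k'} l_i \le U \max_{i<k'} l_i$, hence $\sum_i b_i \le (U+1) \max_i b_i$. If instead $w' > \max_{k \le i < k'} l_i$, the maximum of $b$ equals $w'$, and the same definition yields $\sum_{i<k'} l_i \le U \max_{i<k'} l_i < U w'$, hence $\sum_i b_i < (U+1) w'$. (The boundary case $k' = k$, in which $b = (w')$ and the ratio equals $1$, is immediate.) Either way, the conditional variance is $\Omega(1/U^2)$.

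For the $\Omega(1/\log m)$ term, the plan is to reduce to the fully-selective setup with $m$ data points, for which the $\Omega(1/\log m)$ lower bound of~\cite{Drucker13, QV19} applies. The adversary takes a hard fully-selective sequence $z \in [0, 1]^m$ and embeds it into $\calL$ by setting $x_t = z_{k(t)}$. At stopping time $t_k$, the forecaster has learned exactly $z_1, \ldots, z_{k-1}$, matching the fully-selective game on $z$. The main difficulty here is that a PLS window of length $w$ averages the $z_i$'s with \emph{weights} proportional to the $l_i$'s (and to $w'$ for the partial block) rather than uniformly; completing the argument requires invoking the fully-selective lower bound in a form that remains hard against arbitrary weighted averages, which should hold because the hardness arises from the forecaster's inability to localize the relevant scale rather than from any particular averaging. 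Taking the maximum of the two bounds yields the claim.
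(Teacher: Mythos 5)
Your argument for the $\Omega(1/[\mprime(\calL)]^2)$ part is correct and is essentially the paper's proof of \Cref{thm:lower-first}: one independent uniform bit per block (your $\pm 1$ signs and the paper's $\Bern(1/2)$ coins coincide after the affine shift), followed by lower-bounding the conditional variance by isolating a single block that carries an $\Omega(1/\mprime)$ fraction of the window. You package the extraction step as $\sum_i b_i \le (U+1)\max_i b_i$ via a two-case comparison against the partial tail $w'$; the paper packages the same extraction as \Cref{lemma:block-overlap}. The bounds agree up to constants.

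The $\Omega(1/\log m)$ direction has a genuine gap, and you flag it yourself without resolving it. You embed a hard fully-selective sequence $z \in [0,1]^m$ via $x_t = z_{k(t)}$, and then note that the PLS forecaster is asked to predict a \emph{weighted} average $\sum_i (b_i/w) z_i$ rather than the \emph{uniform} average that the~\cite{Drucker13,QV19} lower bound controls. Your claim that the fully-selective hardness ``should'' survive arbitrary weightings is the entire missing content of the theorem, not a routine completion. The binary-tree construction of~\cite{QV19} is balanced by \emph{number of leaves}: each contiguous range of $2^j$ leaves shares one fresh edge-noise of variance $\Theta(1/\log m)$ that owns an $\Omega(1)$ fraction of a \emph{uniform} average over that range. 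Once the weights $b_i$ can be wildly non-uniform, the shared noise that construction locates may receive a vanishing fraction of the window's total weight, and the variance lower bound degrades with no control. Repairing this requires building the correlation structure around the lengths $l_i$ themselves, which is exactly why the paper introduces a \emph{ternary} tree (\Cref{def:tree-construction}) that recursively splits by total length $S$ into pieces of weight in $[S/4,\,3S/4]$, carving off any dominating block (length $> S/2$) as a separate middle child. The accompanying \Cref{lemma:technical} is the weighted statement you actually need: for any window there is an edge $(u,v)$ with $\ind(v)$ disjoint from the observed prefix, $\totlen(\ind(v)) \ge \Omega(1)\cdot\totlen$ of the window, and $\size(v) \le \size(u)/2$ so the fresh noise on $(u,v)$ has variance $\Omega(1/\log m)$ by \Cref{lemma:conditional-var}. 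Without such a weight-aware version, your reduction does not close, and the $\Omega(1/\log m)$ part of the theorem remains unproved in your write-up.
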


While the first lower bound of $\Omega(1/\mprime^2)$ does not match the upper bound in \Cref{thm:upper}, it already has some interesting applications. The following corollary (proved in \Cref{sec:cor}) presents concrete examples where both the sequence length $n$ and the number of blocks $m$ tend to infinity, yet the worst-case error remains lower bounded by a constant. In the first example, the block lengths are geometrically increasing, so $m$ is at most logarithmic in the sequence length $n$. The second example, inspired by the Cantor set, shows that an $\Omega(1)$ error is still unavoidable even if $m$ is polynomial in $n$.

\begin{restatable}{corollary}{corlower}
\label{cor:lower}
    For every $m \ge 1$, on the PLS instance $\calL_m = (2^0, 2^1, 2^2, \ldots, 2^{m-1})$ with sequence length $n = 2^m - 1$ and $m$ blocks, every forecasting algorithm has an $\Omega(1)$ worst-case error. Furthermore, for every $k \ge 1$, there is a PLS instance $\calL'_k$ with sequence length $n = 3^k$ and $m = 2^{k+1} - 1$ blocks, on which every forecasting algorithm has an $\Omega(1)$ worst-case error.
\end{restatable}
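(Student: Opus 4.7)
The plan is to derive both parts from \Cref{thm:lower} by showing that $\mprime(\calL)$ is bounded by an absolute constant on each instance, which makes the $\Omega(1/[\mprime(\calL)]^2)$ term in \Cref{thm:lower} an $\Omega(1)$ bound.

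For $\calL_m = (2^0, 2^1, \ldots, 2^{m-1})$, I would directly compute the ratio in \Cref{def:approximate-uniformity}. For any $1 \le i \le j \le m$, the sum $l_i + l_{i+1} + \cdots + l_j = 2^{i-1} + 2^i + \cdots + 2^{j-1} = 2^j - 2^{i-1}$, while the maximum of these blocks is $l_j = 2^{j-1}$. Thus the ratio equals $2 - 2^{i-j} < 2$, so $\mprime(\calL_m) < 2$. Invoking \Cref{thm:lower} then yields worst-case error $\Omega(1)$. Verifying $\sum_{i=1}^m 2^{i-1} = 2^m - 1 = n$ is a sanity check.

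For the Cantor-like instance, I would define $\calL'_k$ recursively by $\calL'_0 = (1)$ and $\calL'_k = (\calL'_{k-1}, 3^{k-1}, \calL'_{k-1})$, i.e., two scaled-down copies separated by a single middle block of length $3^{k-1}$. A short induction confirms the claimed parameters: the number of blocks satisfies $m_k = 2 m_{k-1} + 1 = 2^{k+1} - 1$ and the total length $n_k = 2 n_{k-1} + 3^{k-1} = 3^k$. To bound $M_k \coloneqq \mprime(\calL'_k)$, consider any contiguous subrange of blocks in $\calL'_k$. If it lies entirely within the first or the second copy of $\calL'_{k-1}$, its ratio is at most $M_{k-1}$ by induction. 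Otherwise, it must contain the middle block of length $3^{k-1}$, so its maximum is at least $3^{k-1}$ while its sum is at most the total $3^k$, giving a ratio at most $3$. Hence $M_k \le \max(M_{k-1}, 3)$, and since $M_0 = 1$, we conclude $\mprime(\calL'_k) \le 3$. Applying \Cref{thm:lower} once more yields the $\Omega(1)$ lower bound.

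I do not anticipate a serious obstacle: both arguments reduce to an arithmetic calculation of $\mprime$. The only point needing a little care is the recursive definition and inductive bound on $M_k$ for the Cantor-like instance, specifically the observation that any interval spanning the middle block of $\calL'_k$ automatically contains a block of length $3^{k-1}$, which is the reciprocal of a constant fraction of the maximum possible total $3^k$.
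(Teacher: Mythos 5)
Your proposal is correct and matches the paper's argument essentially line for line: both parts reduce to showing $\mprime = O(1)$ and invoking the $\Omega(1/\mprime^2)$ bound of \Cref{thm:lower}, and your recursive Cantor-like construction (starting at $\calL'_0 = (1)$ rather than the paper's $\calL'_1 = (1,1,1)$, which yields the same sequences) and the case split on whether a subrange contains the middle block are exactly what the paper does.
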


The second lower bound of $\Omega(1/\log m)$ shows that an $m$-block PLS instance is the easiest when all blocks have the same length: An $O(1/\log m)$ worst-case error can be achieved when $l_1 = l_2 = \cdots = l_m$, while no algorithm can achieve a worst-case error $\ll 1/\log m$ on \emph{any} instance with $m$ blocks. While this result might sound intuitive, our proof of the $\Omega(1/\log m)$ bound is non-trivial. The proof involves a novel hierarchical decomposition of the $m$ blocks into a ternary tree and the design of a random process on the resulting tree. This extends a construction of~\cite{QV19} for $m$ blocks of equal lengths, which is based on representing the $m$ blocks as a full binary tree of depth $\log m$.

\paragraph{An average-case analysis.} While the instance-dependent upper and lower bounds do not match on every PLS instance, they \emph{do} match with high probability when the instance is randomly generated. A sequence is \emph{$k$-monotone} if it can be partitioned into at most $k$ contiguous monotone subsequences. Our next result states that, if $\calT$ is a $\pstar$-random stopping time set (\Cref{def:random-instance}) for a $k$-monotone sequence $\pstar$, both $|\calT|$ and $\mprime(\calT)$ can be bounded in terms of $\|\pstar\|_1$ with high probability.

\begin{theorem}\label{thm:average-case}
    Suppose that $\pstar \in [0, 1]^n$ is $k$-monotone and $\calT$ is a $\pstar$-random stopping time set. Let $m_0 \coloneqq \sum_{t=0}^{n-1}\pstar_t$. The following two hold simultaneously with probability $1 - e^{-m_0 / 3} - 1/n$ over the randomness in $\calT$: (1) $|\calT| = O\left(m_0\right)$; (2) $\mprime(\calT) = \Omega(m_0 / (k\log^2n))$.
\end{theorem}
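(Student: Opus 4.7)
The plan is to prove the two claims essentially independently. For claim (1), observe that $|\calT| = \sum_{t=0}^{n-1} \1{t \in \calT}$ is a sum of independent Bernoullis with mean $m_0$, so the multiplicative Chernoff bound gives $\Pr[|\calT| > 2 m_0] \le e^{-m_0/3}$. The bulk of the work is claim (2). The strategy is to locate a contiguous time window $B \subseteq \{0,\ldots,n-1\}$ on which (a) the probabilities $\pstar_t$ all lie within a factor of two of some value $p$, and (b) the expected mass $\mu := \sum_{t \in B}\pstar_t$ is $\Omega(m_0/(k\log n))$; and then extract from $\calT \cap B$ a contiguous range of block lengths in $\calL$ whose sum-to-max ratio realizes the target $\Omega(m_0/(k\log^2 n))$.

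To select $B$, use $k$-monotonicity to split $\{0,\ldots,n-1\}$ into $k$ contiguous monotone pieces. Timesteps with $\pstar_t < m_0/(4n)$ contribute at most $m_0/4$ total mass and can be discarded. Within each piece, partition the remaining timesteps by the dyadic level $\pstar_t \in (2^{-j-1}, 2^{-j}]$; monotonicity makes each level a contiguous sub-interval, and only $O(\log n)$ nonempty levels survive per piece (since $\pstar_t \ge m_0/(4n)$). Pigeonhole over the $O(k \log n)$ resulting buckets then yields one, call it $B$, carrying mass $\mu = \Omega(m_0/(k\log n))$ with probabilities in $(p,2p]$ throughout, and $|B| \ge \mu/(2p)$. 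If $\mu = O(\log n)$ (equivalently $m_0 = O(k \log^2 n)$), the target bound is $O(1)$ and is met trivially by $\mprime(\calT) \ge 1$; so assume $\mu \ge C\log n$ for a sufficiently large constant $C$.

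Next, apply a max-gap argument inside $B$. Set $L := \lceil 3\log n / p\rceil$. For any length-$L$ sub-window $W \subseteq B$, $\Pr[\calT \cap W = \emptyset] \le (1-p)^L \le n^{-3}$. A union bound over the at most $n$ such sub-windows shows that, with probability $1 - 1/n^2$, every length-$L$ sub-window of $B$ meets $\calT$. This forces both that the maximum gap between consecutive elements of $\calT \cap B$ is at most $L$ and that the extreme elements of $\calT \cap B$ lie within $L$ of the endpoints of $B$. Writing $\tau_1 < \cdots < \tau_N$ for the elements of $\calT \cap B$, the gaps $\tau_2 - \tau_1, \ldots, \tau_N - \tau_{N-1}$ form a contiguous range of block lengths in $\calL$ summing to $\tau_N - \tau_1 \ge |B| - 2L \ge |B|/2 \ge \mu/(4p)$ (using $|B| \ge \mu/(2p) \gg L$, which follows from $\mu \ge C \log n$), each of size at most $L$. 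Hence
\[
    \mprime(\calT) \;\ge\; \frac{\tau_N - \tau_1}{L} \;=\; \Omega\!\left(\frac{\mu}{\log n}\right) \;=\; \Omega\!\left(\frac{m_0}{k\log^2 n}\right),
\]
and combining with claim (1) via union bound gives both conclusions with probability $1 - e^{-m_0/3} - O(1/n)$.

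The main obstacle is that the dyadic-bucket plus max-gap scheme only provides useful concentration once the pigeonholed bucket carries expected mass $\mu \gtrsim \log n$; reconciling this with arbitrary $m_0$ requires treating the regime $m_0 \lesssim k \log^2 n$ separately as a trivial case where $\mprime(\calT) \ge 1$ already suffices. A secondary technical point is that the max-gap argument only controls gaps lying entirely inside $B$, so we additionally need $\tau_1$ and $\tau_N$ to be close to the boundary of $B$ in order for the captured block sum to be comparable to $|B|$; this is handled by the same union bound, which forces sub-windows near both endpoints of $B$ to intersect $\calT$.
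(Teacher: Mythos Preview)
Your proof is correct and follows the same overall architecture as the paper: Chernoff for claim~(1), then locate a contiguous window with a uniform lower bound on the $\pstar_t$'s and run a max-gap union bound inside it. The only difference is in how the window is found: the paper extracts it via a harmonic-sum averaging argument (its Lemma on $k$-monotone subsequences shows some $i$ in a monotone piece of mass $S$ satisfies $(j_0-i+1)\pstar_i \ge S/H_n$), whereas you discard tiny probabilities and pigeonhole over $O(k\log n)$ dyadic levels; both yield a window with $(\text{length})\cdot(\text{min probability}) = \Omega(m_0/(k\log n))$, after which the two arguments are identical.
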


As a direct corollary, our bounds on the optimal worst-case error are tight up to a constant factor with high probability. We prove this corollary in \Cref{sec:cor}.

\begin{restatable}{corollary}{coraveragecase}
\label{cor:average-case}
    If $k$-monotone sequence $\pstar \in [0, 1]^n$ satisfies $\sum_{t=0}^{n-1}\pstar_t \ge \Omega((k\log^2n)^{1 + c})$ for some constant $c > 0$, with high probability over the randomness in the $\pstar$-random stopping time set $\calT$, the forecasting algorithm from \Cref{thm:upper} has a worst-case error that is optimal up to a constant factor.
\end{restatable}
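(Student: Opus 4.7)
The plan is to combine the instance-dependent bounds of Theorems~\ref{thm:upper} and~\ref{thm:lower} with the high-probability control on $|\calT|$ and $\mprime(\calT)$ provided by Theorem~\ref{thm:average-case}, and then to verify algebraically that the upper bound $O(1/\log\mprime(\calT))$ and the lower bound $\Omega(\max\{1/\mprime(\calT)^2,\, 1/\log|\calT|\})$ agree up to a constant factor under the hypothesis on $m_0 \coloneqq \sum_t \pstar_t$.

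First I would invoke Theorem~\ref{thm:average-case}, supplemented by a standard Chernoff lower tail on $|\calT| = \sum_t \1{t \in \calT}$, to conclude that with probability $1 - o(1)$ one has simultaneously
\[
    |\calT| = \Theta(m_0), \qquad \mprime(\calT) \geq \Omega\!\left(m_0 / (k \log^2 n)\right).
\]
The hypothesis $m_0 \geq (k \log^2 n)^{1+c}$ with $k \geq 1$ forces $m_0 \geq \log^{2(1+c)} n$, so the failure probability $e^{-m_0 / 3} + 1/n$ from Theorem~\ref{thm:average-case} is indeed $o(1)$, justifying the use of ``high probability.''

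The key step is to show $\log \mprime(\calT) = \Omega(\log|\calT|)$. Taking logs of the lower bound on $\mprime(\calT)$,
\[
    \log \mprime(\calT) \geq \log m_0 - \log k - 2 \log \log n - O(1).
\]
The hypothesis $m_0 \geq (k \log^2 n)^{1+c}$ is equivalent to $\log k + 2 \log \log n \leq \tfrac{1}{1+c}\log m_0$, so the right-hand side is at least $\tfrac{c}{1+c}\log m_0 - O(1)$. Combined with $\log|\calT| = \log m_0 + O(1)$, this yields $\log \mprime(\calT) \geq \Omega(\log|\calT|)$, matching $1/\log\mprime(\calT)$ with $1/\log|\calT|$ up to constants.

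Finally, the same hypothesis gives $\mprime(\calT) \geq \Omega((k \log^2 n)^c) \to \infty$, so $\mprime(\calT)^2$ dominates $\log|\calT| = O(\log n)$ for large $n$; the $1/\mprime(\calT)^2$ branch of the lower bound is therefore negligible, and the lower bound simplifies to $\Omega(1/\log|\calT|) = \Omega(1/\log\mprime(\calT))$, matching the upper bound from Theorem~\ref{thm:upper}. The main point of care is the bookkeeping in the displayed inequality, where the constant $c > 0$ is precisely what is needed to absorb the $O(1)$ additive slack into $\Omega(\log|\calT|)$; this is the only place where the assumed gap between $m_0$ and $k \log^2 n$ is actually used.
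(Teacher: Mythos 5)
Your proof is correct and follows essentially the same route as the paper's: invoke Theorem~\ref{thm:average-case} to get $|\calT| \le O(m_0)$ and $\mprime(\calT) \ge \Omega(m_0/(k\log^2 n))$ with high probability, then use the hypothesis $m_0 \ge (k\log^2 n)^{1+c}$ to show $\mprime(\calT) \ge \Omega(m_0^{c/(1+c)})$, so that $\log\mprime(\calT) = \Theta(\log m_0) = \Theta(\log|\calT|)$ and the $O(1/\log\mprime(\calT))$ upper bound of Theorem~\ref{thm:upper} meets the $\Omega(1/\log|\calT|)$ lower bound of Theorem~\ref{thm:lower}. One small remark: the Chernoff lower tail you add to get $|\calT| \ge \Omega(m_0)$ is not actually needed; the argument only uses $|\calT| \le O(m_0)$, which already gives $\log|\calT| \le \log m_0 + O(1)$, the only direction required in the bookkeeping. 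Likewise the closing observation that the $1/\mprime(\calT)^2$ branch is negligible is superfluous, since the lower bound is a maximum and one may simply use the $1/\log|\calT|$ branch directly.
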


Concretely, if $k = O(1)$ is a constant, \Cref{cor:average-case} applies whenever the expected number of stopping times, $\sum_{t=0}^{n-1}\pstar_t$, is at least $\polylog(n)$. If $k = O(n^{\alpha})$ is polynomial in $n$ for some $\alpha \in (0, 1)$, we have nearly-tight bounds as long as $\sum_{t=0}^{n-1}\pstar_t = \Omega(n^\beta)$ for some $\beta \in (\alpha, 1]$.

\subsection{Related Work}
Most closely related to our study is the prior work on selective prediction. Drucker~\cite{Drucker13} introduced the problem of selective mean prediction under the name of ``density prediction game'' and proved an $O(1/\log n)$ upper bound on the expected error. Qiao and Valiant~\cite{QV19} proved a matching lower bound, and extended the positive result of~\cite{Drucker13} to the setting of predicting more general functions.

Chen, Valiant, and Valiant~\cite{CVV20} introduced a more general framework that encompasses selective prediction as well as other data-collection procedures including importance sampling and snowball sampling. Brown-Cohen~\cite{BC21} subsequently obtained a faster algorithm under this framework. Qiao and Valiant~\cite{QV21} studied a ``learning'' variant of selective prediction, in which the learner observes multiple sequences and aims to identify the sequence with the highest average inside a prediction window of their choice.

All the positive results above are based on the Ramsey-theoretic observation that a sufficiently long, bounded sequence must be ``predictable'' or ``repetitive'' at \emph{some} scale. This allows a selective forecaster to randomly select a timescale and achieve a vanishing error as the sequence length goes to infinity. Similar observations have been made in different contexts~\cite{Feige15,FKT17,MHO25}.

More broadly, our setting is related to the recent work on online prediction with abstention, where the forecasting algorithm is allowed to occasionally abstain from making predictions at an additional cost~\cite{ZC16,CDGMY18,NZ20,GKCS21,GHMS23,PRTSMVH24}.

\section{Instance-Dependent Upper Bounds}\label{sec:upper}
\subsection{Special Case: Approximately Uniform Block Lengths}
Recall from \Cref{def:block-rep} that a PLS instance can be represented by a list of block lengths $\calL = (l_1, l_2, \ldots, l_m)$. Towards proving \Cref{thm:upper}, we start with the case that the block lengths do not vary drastically and prove an $O(1/\log m)$ upper bound on the worst-case error. Our algorithm is defined in \Cref{alg:uniform_blocks}. It calls $\RandomSelect$ (\Cref{alg:random_selection}) to obtain a randomized prediction position $i$ and length $j$. The algorithm then reads the first $i-1$ blocks of the sequence, and uses the mean of the last $j$ blocks (namely, blocks $i - j, i - j + 1, \ldots, i - 1$) to predict the mean of the next $j$ blocks ($i$ through $i + j - 1$).

\Cref{alg:random_selection} ($\RandomSelect(s, k)$) is a recursive procedure that prescribes a prediction position and a window length within $2^k$ consecutive blocks (with indices $s$ to $s + 2^k - 1$). With probability $1/k$, it outputs $(s + 2^{k-1}, 2^{k-1})$, i.e., predicting the average of the last $2^{k-1}$ blocks using that of the first $2^{k-1}$. Otherwise, it computes $p$, the proportion of the total length of the first $2^{k-1}$ blocks within the $2^k$ blocks. It then recurses on one of the two halves with probability $p$ and $1-p$, respectively.

\begin{algorithm2e}
\caption{$\RandomSelect(s, k)$}
\label{alg:random_selection}
\KwIn{Integers $s, k \ge 1$.}
\KwOut{A pair $(i, j)$ such that $s \le i - j$ and $i + j \le s + 2^k$.}
With probability $1/k$, \Return $(s + 2^{k-1}, 2^{k-1})$\;
$p \gets \left(\sum_{i=0}^{2^{k-1}-1}l_{s + i}\right) / \left(\sum_{i=0}^{2^k-1}l_{s + i}\right)$\;
\Return $\RandomSelect(s, k-1)$ with probability $p$, and $\RandomSelect(s+2^{k-1}, k-1)$ with the remaining probability $1 - p$\;
\end{algorithm2e}

\begin{algorithm2e}
\caption{Prediction with Limited Selectivity on Approximately Uniform Blocks}
\label{alg:uniform_blocks}
\KwIn{Instance $\calL = (l_1, l_2, \ldots, l_m)$. Sequential access to sequence $x \in [0, 1]^n$ of length $n = l_1 + l_2 + \cdots + l_m$.}
$k \gets \lfloor \log_2 m\rfloor$\;
$(i, j) \gets \RandomSelect(1, k)$\;
$t \gets l_1 + l_2 + \dots + l_{i-1}$\;
Read $x_1, x_2, \ldots, x_t$\;
$w_0 \gets l_{i-j} + l_{i-j+1} + \dots + l_{i-1}$\;
$\hat{\mu} \gets \frac{1}{w_0} (x_t + x_{t-1} + \dots + x_{t-w_0+1})$\;
$w \gets l_i + l_{i+1} + \dots + l_{i+j-1}$\;
Predict the mean of ${x}_{t+1}, \dots ,{x}_{t+w}$ as $\hat{\mu}$\;
\end{algorithm2e}

\begin{restatable}{proposition}{propuniform}
\label{prop:uniform}
On PLS instance $\calL = (l_1, l_2, \ldots, l_m)$ that satisfies $\frac{\max\{l_1, l_2, \ldots, l_m\}}{\min\{l_1, l_2, \ldots, l_m\}} \le C$, \Cref{alg:uniform_blocks} has a worst-case error of $O(C/ \log m)$.
\end{restatable}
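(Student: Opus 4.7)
The plan is to interpret $\RandomSelect(1, k)$ as a random walk on the complete binary tree $T$ of depth $k = \lfloor \log_2 m \rfloor$ whose leaves are the first $2^k$ blocks, and to combine a weighted variance decomposition with a telescoping argument in the style of~\cite{Drucker13}. For each node $v$ at depth $d_v$, write $W(v) \coloneqq \sum_{i \in v} l_i$ for the total block length and $M(v) \coloneqq \frac{1}{W(v)} \sum_{t \in v} x_t$ for the mean value. When $\RandomSelect$ reaches $v$, with probability $1/(k-d_v)$ it halts and predicts $M(r_v)$ by $M(\ell_v)$ (incurring squared error $(M(\ell_v) - M(r_v))^2$), and otherwise it recurses into child $u$ with probability $W(u)/W(v)$. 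An induction using the telescoping product $\prod_{d'=0}^{d-1}\bigl(1 - \tfrac{1}{k-d'}\bigr) = \tfrac{k-d}{k}$ yields the clean formula
\[
    p_v \;=\; \frac{k - d_v}{k}\cdot \frac{W(v)}{W(\mathrm{root})}
\]
for the probability of reaching $v$. The $(k - d_v)$ factors then cancel with the stopping probability $1/(k-d_v)$:
\[
    \mathbb{E}[\text{err}] \;=\; \sum_{v\text{ internal}} p_v \cdot \frac{1}{k - d_v} \cdot (M(\ell_v) - M(r_v))^2 \;=\; \frac{1}{k\,W(\mathrm{root})} \sum_v W(v)\,(M(\ell_v) - M(r_v))^2.
\]

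Second, I would invoke the standard weighted variance decomposition $Q(v) = Q(\ell_v) + Q(r_v) + \frac{W(\ell_v) W(r_v)}{W(v)}(M(\ell_v) - M(r_v))^2$, with $Q(v) \coloneqq \sum_{t \in v}(x_t - M(v))^2$, to rewrite
\[
    \mathbb{E}[\text{err}] \;=\; \frac{1}{k\,W(\mathrm{root})}\sum_v \frac{W(v)^2}{W(\ell_v)\,W(r_v)}\bigl(Q(v) - Q(\ell_v) - Q(r_v)\bigr).
\]
The uniformity hypothesis enters at this step: since $\ell_v$ and $r_v$ contain the same number of blocks and each block length lies in $[l_{\min}, C l_{\min}]$, the ratio $W(\ell_v)/W(r_v)$ is in $[1/C, C]$, which gives $W(v)^2/(W(\ell_v)\,W(r_v)) \le (1+C)^2/C = O(C)$.

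Third, $\sum_v(Q(v) - Q(\ell_v) - Q(r_v))$ telescopes along $T$: every non-root node $v$ contributes $+Q(v)$ from its own term and $-Q(v)$ from its parent's term, so only $+Q(\mathrm{root})$ and the non-positive leaf contributions survive. Since $x_t \in [0,1]$, $Q(\mathrm{root}) \le W(\mathrm{root})/4$, hence $\sum_v (Q(v) - Q(\ell_v) - Q(r_v)) \le W(\mathrm{root})/4$. Combining the three displays yields $\mathbb{E}[\text{err}] \le \frac{O(C)}{k\,W(\mathrm{root})} \cdot \frac{W(\mathrm{root})}{4} = O(C/\log m)$, as desired.

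The main technical point is deriving the clean probability formula $p_v = \frac{k - d_v}{k} \cdot \frac{W(v)}{W(\mathrm{root})}$: it is precisely the engineered cancellation of the $(k - d_v)$ factors that reduces the analysis to a length-weighted sum of squared gaps. The choices in $\RandomSelect$---stopping with probability $1/(k - d_v)$ and recursing proportionally to $W(\cdot)$---are tailored to make this happen. Once this is in hand, the variance decomposition, the telescoping, and the $O(C)$ ratio bound from uniformity are each essentially a one-liner.
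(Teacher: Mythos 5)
Your proof is correct, and it takes a genuinely different (though closely related) route from the paper's. The paper proves the bound by induction on $k$ via a potential function $L(k,\mu)$, the maximum expected squared error on $2^k$ blocks with average $\mu$, showing $L(k,\mu)\le\frac{(C+1)^2}{C}\cdot\frac{\mu(1-\mu)}{k}$; the telescoping is hidden inside the recursion. You instead unroll the recursion globally: you derive the closed-form reach probability $p_v=\frac{k-d_v}{k}\cdot\frac{W(v)}{W(\mathrm{root})}$ (whose correctness rests on the two telescoping products over depth and over edge-transition weights, both of which you state), observe that the $(k-d_v)$ factors cancel exactly against the stopping probability, and then telescope the resulting length-weighted sum of squared gaps via the weighted variance identity $Q(v)-Q(\ell_v)-Q(r_v)=\frac{W(\ell_v)W(r_v)}{W(v)}(M(\ell_v)-M(r_v))^2$. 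The two proofs rely on the identical algebraic ingredients --- the weighted variance decomposition (the paper's Equation (1) is exactly your identity, rearranged), and the ratio bound $\frac{W(v)^2}{W(\ell_v)W(r_v)}\le\frac{(1+C)^2}{C}$ from the uniformity hypothesis --- so neither approach is stronger; but your closed-form $\mathbb{E}[\text{err}]=\frac{1}{k\,W(\mathrm{root})}\sum_v W(v)(M(\ell_v)-M(r_v))^2$ makes the structure of the argument more transparent and shows explicitly why the $1/(k-d_v)$ stopping probability is the ``right'' choice. One small caveat to make your final step airtight: the inequality $\sum_v \frac{W(v)^2}{W(\ell_v)W(r_v)}(Q(v)-Q(\ell_v)-Q(r_v))\le O(C)\sum_v(Q(v)-Q(\ell_v)-Q(r_v))$ requires noting that each summand $Q(v)-Q(\ell_v)-Q(r_v)$ is nonnegative (which it is, by the variance identity) before you pull out the uniform factor $O(C)$; you should state this explicitly rather than bound each factor and telescope as if the terms had no sign issues.
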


This extends the result of \cite{Drucker13} for $C = 1$, i.e., all blocks have the same length. The proof is similar to those in prior work and deferred to \Cref{sec:upper-app}. We provide a brief proof sketch below.

\begin{proof}[Proof sketch]
    For $k \ge 1$ and $\mu \in [0, 1]$, let $L(k, \mu)$ be the maximum squared error that \Cref{alg:uniform_blocks} incurs on a sequence of $2^k$ blocks with average $\mu$. We prove by induction that $L(k, \mu) \le O(C/k) \cdot \mu(1-\mu)$. The proposition then follows from $C/k = O(C/\log m)$ and $\mu(1-\mu) = O(1)$.
\end{proof}

\subsection{The General Case}
To prove \Cref{thm:upper}, we merge the $m$ blocks of possibly different lengths into $\approx \mprime(\calL)$ blocks with lengths within a constant factor, so that \Cref{prop:uniform} can be applied to obtain an $O(1/\log\mprime(\calL))$ error bound. Formally, we say that a PLS instance $\calL'$ is a \emph{merge} of another instance $\calL$, if $\calL'$ can be obtained from $\calL$ by merging consecutive blocks and taking a contiguous subsequence.

\begin{definition}
\label{def:merge}
    $\calL' = (l'_1, l'_2, \ldots, l'_{m'})$ is a merge of $\calL = (l_1, l_2, \ldots, l_m)$ if there are $1 \le i_1 < i_2 < \cdots < i_{m'} < i_{m' + 1} \le m + 1$ such that $l'_j = l_{i_j} + l_{i_j + 1} + \cdots + l_{i_{j+1} - 1}$ for every $j \in [m']$.
\end{definition}

For instance, $(5, 9)$ is a merge of $(1, 2, 3, 4, 5, 6)$ witnessed by $(i_1, i_2, i_3) = (2, 4, 6)$: we merge consecutive elements to obtain $(1, 5, 9, 6)$ and take the contiguous subsequence $(5, 9)$. Naturally, if the merge of a PLS instance can be solved with a low worst-case error, so can the original instance. We prove the following lemma in \Cref{sec:upper-app}.

\begin{restatable}{lemma}{lemmamergeeasier}
\label{lemma:merge-easier}
    If $\calL'$ is a merge of $\calL$, the minimum worst-case error that can be obtained on $\calL$ is smaller than or equal to that on $\calL'$.
\end{restatable}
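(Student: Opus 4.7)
The plan is to prove this by a direct simulation argument: given any algorithm $\calA'$ that achieves worst-case error $\eps$ on $\calL'$, I would construct an algorithm $\calA$ on $\calL$ that achieves worst-case error at most $\eps$.

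First I would set up notation to translate between the two instances. Given merge indices $1 \le i_1 < i_2 < \cdots < i_{m'+1} \le m+1$, I would define the prefix offset $s \coloneqq l_1 + l_2 + \cdots + l_{i_1 - 1}$ and the truncated length $n' \coloneqq l'_1 + l'_2 + \cdots + l'_{m'} = l_{i_1} + l_{i_1+1} + \cdots + l_{i_{m'+1}-1}$. Writing $t_j$ and $t'_j$ for the stopping times of the canonical PLS instances associated with $\calL$ and $\calL'$ respectively, a short telescoping calculation yields the key identity $s + t'_j = t_{i_j}$ for each $j \in [m']$. Thus the stopping times of the merged instance, after shifting by $s$, land exactly inside $\calT$.

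The algorithm $\calA$ then reads and discards the inputs $x_1, \ldots, x_s$, and feeds $x_{s+1}, x_{s+2}, \ldots$ to a simulated copy of $\calA'$ one at a time. When the simulated $\calA'$ halts at its time $t'_j$ with window $w'$ and prediction $\hat\mu$, the outer algorithm $\calA$ halts at its own time $s + t'_j = t_{i_j} \in \calT$ with the same window $w' = w$ and prediction $\hat\mu$. I would then verify the two required validity conditions: the stopping time lies in $\calT$ by the identity above, and the window length satisfies $w \le n' - t'_j \le n - t_{i_j}$ because $s + n' = l_1 + \cdots + l_{i_{m'+1}-1} \le n$ (using $i_{m'+1} \le m+1$). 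A third straightforward check shows that the mean $\calA'$ is trying to predict, namely the average of $x'_{t'_j+1}, \ldots, x'_{t'_j+w'}$, is literally the average of $x_{t_{i_j}+1}, \ldots, x_{t_{i_j}+w}$, which is what $\calA$ is predicting.

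Consequently, for every $x \in [0,1]^n$, the prediction and target of $\calA$ on $x$ coincide with those of $\calA'$ on the subsequence $x' \coloneqq (x_{s+1}, \ldots, x_{s+n'})$, so $\err(\calA, x) = \err(\calA', x')$. Taking suprema over $x \in [0,1]^n$ and observing that $x'$ then ranges over all of $[0,1]^{n'}$, I conclude $\errw(\calA) \le \errw(\calA')$. Minimizing over $\calA'$ gives the desired inequality between the minimum worst-case errors. I do not anticipate any real obstacle here; the only thing to be careful about is the index bookkeeping around the prefix shift $s$ and the truncation at $n'$, since the merge definition allows dropping blocks at both ends of $\calL$.
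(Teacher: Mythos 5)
Your argument is correct and follows the same simulation approach as the paper's proof: discard the prefix of length $s = l_1 + \cdots + l_{i_1-1}$, then run $\calA'$ on the suffix and forward its prediction. The paper states the behavioral equivalence more tersely ("$\calA$ has the same behavior as $\calA'$ on $x'$"), whereas you usefully make explicit the stopping-time identity $s + t'_j = t_{i_j} \in \calT$ and the window-length validity check.
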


The next lemma states that any instance $\calL$ has a merge of length $\Omega(\mprime(\calL))$ that consists of elements within a constant factor.

\begin{lemma}
\label{lemma:merge}
    For every $C > 1$, every PLS instance $\calL$ has a merge $\calL' = (l'_1, l'_2, \ldots, l'_{m'})$ such that: (1) $m' \ge \lfloor (1 - 1 / C)\cdot\mprime(\calL)\rfloor$; (2) $\max\{l'_1, l'_2, \ldots, l'_{m'}\} / \min\{l'_1, l'_2, \ldots, l'_{m'}\} \le C$.
\end{lemma}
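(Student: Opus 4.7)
The plan is to locate a contiguous sub-range of $\calL$ that realizes the approximate uniformity, and then greedily merge its blocks into super-blocks whose lengths lie in a narrow range. First I would fix $i^* \le j^*$ that achieve the maximum in \Cref{def:approximate-uniformity}, and set $M := \max\{l_{i^*}, \ldots, l_{j^*}\}$ and $S := l_{i^*} + \cdots + l_{j^*}$, so that $S/M = \mprime(\calL)$ and every block in the range has length at most $M$. The output $\calL'$ will be obtained by grouping consecutive blocks inside $[i^*, j^*]$ and discarding a trailing remainder, which is automatically a valid merge of $\calL$ per \Cref{def:merge}.

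The construction is a one-pass greedy with threshold $T := M/(C - 1)$: sweep $l_{i^*}, l_{i^*+1}, \ldots, l_{j^*}$ in order, accumulating into a running super-block, and finalize it the first time its length reaches $T$; discard any unfinalized tail at the end. Since the running length was strictly less than $T$ just before the final block was added, and that block has length at most $M$, every finalized super-block has length in $[T,\, T + M)$. Property~(2) is then immediate from $(T + M)/T = 1 + (C - 1) = C$.

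For property~(1) I would count super-blocks by a volume argument: the $m'$ finalized super-blocks (each of length strictly less than $T + M$) together with the discarded tail (of length strictly less than $T$, since it never reached threshold) sum to $S$, which gives $m'(T + M) + T > S$. Substituting $T = M/(C-1)$ turns this into
\[
m' > \bigl(1 - 1/C\bigr)\,\mprime(\calL) - 1/C.
\]
Writing $x := (1 - 1/C)\,\mprime(\calL)$, the bound $m' > x - 1/C$ together with $m' \in \mathbb{Z}_{\ge 0}$ and $1/C < 1$ yields $m' > \lfloor x \rfloor - 1$, hence $m' \ge \lfloor x \rfloor$, which is property~(1).

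There is no genuinely hard step here. The only care needed is with the rounding at the endpoints of the construction — a potentially non-integer threshold $T$ and a discarded trailing remainder — both of which are absorbed by using strict inequalities throughout. Using an integer threshold such as $\lceil M/(C-1)\rceil$ would also work, but slightly complicates the arithmetic and is unnecessary given the clean form above.
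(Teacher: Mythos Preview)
Your proposal is correct and follows essentially the same approach as the paper: pick a contiguous range achieving $\mprime(\calL)$, set the threshold $T = M/(C-1)$, greedily form super-blocks of length in $[T, T+M)$, and count via the volume argument. Your counting step is in fact spelled out more carefully than the paper's (which simply asserts $m' \ge \lfloor L/(T+M)\rfloor$), but the construction and all bounds are identical.
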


\begin{proof}
By definition of $\mprime$, there exist indices $1 \le i_0 \le j_0 \le m$ such that $\frac{l_{i_0} + l_{i_0+1} + \cdots + l_{j_0}}{\max\{l_{i_0},l_{i_0+1},\ldots,l_{j_0}\}} = \mprime(\calL)$. Using the shorthands $L \coloneqq l_{i_0} + l_{i_0+1} + \cdots + l_{j_0}$ and $M \coloneqq \max\{l_{i_0},l_{i_0+1},\ldots,l_{j_0}\}$, we have $L = \mprime(\calL) \cdot M$. Then, we construct $\calL'$ by merging the block lengths $l_{i_0}, l_{i_0 + 1}, \ldots, l_{j_0}$. We set $T \coloneqq M / (C - 1)$ and greedily form longer blocks of length $\approx T$. Formally, we run the following procedure:
\begin{enumerate}[leftmargin=*]
    \item Start with $i_1 = i_0$ and counter $\cnt = 1$.
    \item Check whether $l_{i_{\cnt}} + l_{i_{\cnt} + 1} + \cdots + l_{j_0} < T$. If so, set $m' = \cnt - 1$, $\calL' = (l'_1, l'_2, \ldots, l'_{m'})$, and end the procedure.
    \item Otherwise, find the smallest $k \in \{i_{\cnt}, i_{\cnt} + 1, \ldots, j_0\}$ such that $l_{i_{\cnt}} + l_{i_{\cnt} + 1} + \cdots + l_k \ge T$.
    \item Set $l'_{\cnt} = l_{i_{\cnt}} + l_{i_{\cnt} + 1} + \cdots + l_k$ and $i_{\cnt + 1} = k + 1$. Increment $\cnt$ by $1$ and return to Step~2.
\end{enumerate}

Clearly, the resulting $\calL' = (l'_1, l'_2, \ldots, l'_{m'})$ is a merge of $\calL$ witnessed by indices $i_1, i_2, \ldots, i_{m'+1}$. The construction ensures that $l'_j \in [T, T + M)$ for every $j \in [m']$. Therefore,
\[
    \frac{\max\{l'_1, l'_2, \ldots, l'_{m'}\}}{\min\{l'_1, l'_2, \ldots, l'_{m'}\}}
<   \frac{T + M}{T}
=   \frac{M / (C - 1) + M}{M / (C - 1)}
=   C.
\]
The size of the merge is at least $\left\lfloor\frac{L}{T + M}\right\rfloor
=   \left\lfloor\frac{\mprime(\calL)\cdot M}{M / (C - 1) + M}\right\rfloor
=   \left\lfloor (1 - 1/C)\cdot\mprime(\calL) \right\rfloor$.
\end{proof}

\Cref{thm:upper} directly follows from \Cref{prop:uniform}, \Cref{lemma:merge-easier}, and \Cref{lemma:merge}.

\begin{proof}[Proof of \Cref{thm:upper}]
    Applying \Cref{lemma:merge} with $C = 2$ shows that $\calL$ has a merge $\calL' = (l'_1, l'_2, \ldots, l'_{m'})$ such that $m' \ge \lfloor\mprime(\calL) / 2\rfloor$ and $\max\{l'_1, l'_2, \ldots, l'_{m'}\}/\min\{l'_1, l'_2, \ldots, l'_{m'}\} \le 2$. By \Cref{prop:uniform}, there is a forecasting algorithm for $\calL'$ with a worst-case error of $O(1/\log m') = O(1/\log\mprime(\calL))$. Then, \Cref{lemma:merge-easier} gives a forecasting algorithm for $\calL$ with the same error bound.
\end{proof}

\section{Instance-Dependent Lower Bounds}\label{sec:lower}
\subsection{Lower Bound in Terms of Approximate Uniformity}

\begin{theorem}[First part of \Cref{thm:lower}]\label{thm:lower-first}
    For every PLS instance $\calL$, every forecasting algorithm $\calA$ has a worst-case error of $\errw(\calA) \ge \frac{1}{16[\mprime(\calL)]^2}$.
\end{theorem}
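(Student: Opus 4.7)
The plan is to apply Yao's minimax principle with a Bernoulli-per-block adversary. Consider the distribution $\mathcal{D}$ over $x \in \{0,1\}^n$ in which each block $i \in [m]$ is independently assigned a single bit $\sigma_i \sim \Bern(1/2)$, used for all $l_i$ positions of the block. It suffices to show that the Bayes risk $\inf_{\calA} \mathbb{E}_{x \sim \mathcal{D}}[\err(\calA, x)]$ is at least $1/(16[\mprime(\calL)]^2)$, since this quantity lower bounds $\errw(\calA)$ for every algorithm $\calA$.

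Fix any algorithm and consider a prediction at some stopping time $t_j$ with window length $w$. Suppose the window $[t_j+1, t_j+w]$ fully covers the blocks indexed by $i_1, \ldots, i_2 - 1$ and a (possibly partial) last block $i_2$ with overlap length $r \in (0, l_{i_2}]$; if the window sits entirely inside a single block, take $i_1 = i_2$ and $r = w$. The observations up to time $t_j$ reveal $\sigma_1, \ldots, \sigma_{j-1}$, but the Bernoulli values $\sigma_{i_1}, \ldots, \sigma_{i_2}$ appearing in the window remain independent $\Bern(1/2)$. Hence the Bayes-optimal prediction $\hat\mu = \mathbb{E}[\mu \mid \text{obs}]$ incurs squared error exactly the posterior variance
\[
    \mathrm{Var}[\mu \mid \text{obs}] = \frac{1}{4 w^2}\left(\sum_{i=i_1}^{i_2 - 1} l_i^2 + r^2\right),
\]
which depends only on $(t_j, w)$. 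Thus the Bayes risk equals $\min_{(t_j, w)} \mathrm{Var}[\mu \mid \text{obs}]$, and it remains to show this minimum is at least $1/(16[\mprime(\calL)]^2)$.

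Let $M_W \coloneqq \max\{l_{i_1}, \ldots, l_{i_2-1}, r\}$. Since $\sum l_i^2 + r^2 \ge M_W^2$, the bound reduces to establishing the key inequality $w \le 2\,\mprime(\calL)\,M_W$. The case $i_1 = i_2$ is immediate as $w = r = M_W$. Otherwise, applying the definition of $\mprime$ to the contiguous subrange $[i_1, i_2 - 1]$ of fully-covered blocks yields $\sum_{i=i_1}^{i_2 - 1} l_i \le \mprime(\calL) \cdot M'_W$, where $M'_W \coloneqq \max\{l_{i_1}, \ldots, l_{i_2 - 1}\}$. Adding $r$ to both sides and using $M'_W \le M_W$ and $r \le M_W$, we obtain $w \le \mprime(\calL) M'_W + r \le (\mprime(\calL) + 1) M_W \le 2\,\mprime(\calL)\,M_W$, where the last step uses $\mprime(\calL) \ge 1$ (take $i = j$ in the definition of $\mprime$). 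This establishes the key inequality, completing the proof.

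The main obstacle I anticipate is handling the case in which the window ends with a partial block whose full length $l_{i_2}$ is much larger than $M_W = r$. A naive application of $\mprime$ to the range $[i_1, i_2]$ would yield only a bound in terms of $\max_{[i_1,i_2]} l_i$, which can far exceed $2\,\mprime(\calL)\,M_W$ whenever $l_{i_2} \gg r$ and all other blocks are small. The resolution is to apply $\mprime$ only to the truncated subrange $[i_1, i_2 - 1]$, so that the resulting bound scales with $M'_W \le M_W$; the partial length $r$ is then accounted for separately and controlled by $M_W \ge r$.
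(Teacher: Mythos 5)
Your proof is correct and takes essentially the same approach as the paper: the same Bernoulli-per-block adversary, the same conditional-variance lower bound of $\frac{1}{4w^2}(\sum_i l_i^2 + r^2)$, and the same key fact that the largest block overlap with the window is at least $\frac{w}{2\mprime(\calL)}$. The only cosmetic difference is in how that overlap inequality is derived: the paper's Lemma~\ref{lemma:block-overlap} splits on whether the partial last block covers more than half the window, while you prove $w \le \mprime(\calL)\,M'_W + r \le 2\mprime(\calL)\,M_W$ in one shot; both give the identical $1/(16[\mprime(\calL)]^2)$ bound.
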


Recall from \Cref{def:block-rep} that the block representation $\calL = (l_1, l_2, \ldots, l_m)$ corresponds to a PLS instance with sequence length $n = l_1 + l_2 + \cdots + l_m$.\footnote{Here and in the following, we assume that $0 \in \calT$ in the PLS instance. If not, we note that the forecasting algorithm is not allowed to predict until timestep $t_0 \coloneqq \min\calT$, so the analysis goes through after shifting the timesteps by $t_0$.} The $n$ timesteps are naturally divided into $m$ blocks, where each block $i \in [m]$ consists of timesteps $B_i \coloneqq \{l_1 + l_2 + \cdots + l_{i-1} + j : j \in [l_i]\}$. We prove \Cref{thm:lower-first} using the following observation: Regardless of the prediction window $[t + 1, t + w]$ chosen by the forecasting algorithm, an $\Omega(1 / \mprime(\calL))$ fraction of the timesteps within the window must come from the same unseen block.

\begin{restatable}{lemma}{lemmablockoverlap}
\label{lemma:block-overlap}
    Let $\calL = (l_1, l_2, \ldots, l_m)$ be a PLS instance with sequence length $n$ and stopping time set $\calT$. Then, for every $i_0 \in [m]$, $t = l_1 + l_2 + \cdots + l_{i_0 - 1} \in \calT$ and $w \in [n - t]$, there exists $i \in \{i_0, i_0 + 1, \ldots, m\}$ such that $|B_i \cap [t + 1, t + w]| \ge \frac{w}{2\mprime(\calL)}$.
\end{restatable}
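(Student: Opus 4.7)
The plan is to exploit the fact that $t$ sits exactly at a block boundary, so the window $[t+1, t+w]$ decomposes as a disjoint union of the complete blocks $B_{i_0}, B_{i_0+1}, \ldots, B_{i_1 - 1}$ together with a prefix of one final block $B_{i_1}$. First I would define $i_1 \in \{i_0, i_0+1, \ldots, m\}$ as the largest index with $B_{i_1} \cap [t+1, t+w] \neq \emptyset$, set $r := |B_{i_1} \cap [t+1, t+w]| \in [1, l_{i_1}]$, and record the identity
\[
    w = l_{i_0} + l_{i_0 + 1} + \cdots + l_{i_1 - 1} + r,
\]
where the displayed sum is empty when $i_1 = i_0$ (i.e., when the entire window lies inside a single block).

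The proof then splits into two cases according to the size of $r$. If $r \ge w/2$, block $i_1$ itself already delivers the conclusion, since $|B_{i_1} \cap [t+1, t+w]| = r \ge w/2 \ge w/(2\mprime(\calL))$; note that this case also absorbs the degenerate situation $i_1 = i_0$, in which $w = r$. If instead $r < w/2$, the identity above yields $l_{i_0} + l_{i_0+1} + \cdots + l_{i_1 - 1} = w - r > w/2$, which in particular forces $i_1 > i_0$, so the index range $[i_0, i_1 - 1]$ is nonempty and \Cref{def:approximate-uniformity} applies to it. This gives
\[
    \max\{l_{i_0}, l_{i_0+1}, \ldots, l_{i_1 - 1}\} \ge \frac{l_{i_0} + l_{i_0+1} + \cdots + l_{i_1-1}}{\mprime(\calL)} > \frac{w}{2\mprime(\calL)},
\]
and the index $i \in [i_0, i_1 - 1]$ attaining the maximum is a block fully contained in the window, so $|B_i \cap [t+1, t+w]| = l_i > w/(2\mprime(\calL))$, as required.

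The argument is essentially a pigeonhole combined with a single case split, so I do not anticipate a genuine obstacle. The only subtlety is that invoking $\mprime(\calL)$ requires a nonempty index range, which is precisely what the threshold $r \ge w/2$ vs.\ $r < w/2$ handles: in the first branch we finish using block $i_1$ alone (covering the $i_1 = i_0$ corner case), and in the second branch the inequality $r < w/2$ automatically guarantees $i_1 > i_0$.
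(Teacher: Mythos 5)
Your proof is correct and takes essentially the same approach as the paper's: identify the last block $i_1$ (the paper calls it $j_0$) that intersects the prediction window, write its overlap $r$ (the paper's $\delta$), and split on whether $r \ge w/2$ (in which case the partial block suffices) or $r < w/2$ (in which case the definition of $\mprime(\calL)$ applied to the nonempty range of complete blocks produces a block of length at least $w/(2\mprime(\calL))$). The only cosmetic difference is notation; the decomposition, the threshold $w/2$, and the use of $\mprime(\calL) \ge 1$ in the first case all coincide with the paper's argument.
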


The proof is deferred to \Cref{sec:lower-app}. Next, we show how \Cref{thm:lower-first} follows from \Cref{lemma:block-overlap}.

\begin{proof}[Proof of \Cref{thm:lower-first} assuming \Cref{lemma:block-overlap}]
    Consider the random sequence $x \in \{0, 1\}^n$ constructed by setting all entries within each block to the same bit chosen independently and uniformly at random. Formally, we draw $\mu_1, \mu_2, \ldots, \mu_m \sim \Bern(1/2)$ independently. Then, for each $i \in [m]$ and $j \in B_i$, we set $x_j = \mu_i$.

    Fix a forecasting algorithm $\calA$. For $t \in \calT$ and $w \in [n - t]$, let $\event_{t,w}$ denote the event that $\calA$ makes a prediction at time $t$ on $X_{t,w} \coloneqq \frac{1}{w}\sum_{i=1}^{w}x_{t+i}$. We will show that, conditioning on event $\event_{t,w}$ and any observation $x_{1:t} = (x_1, x_2, \ldots, x_t)$, the conditional variance of $X_{t,w}$ is at least $\Omega(1/[\mprime(\calL)]^2)$. This would imply that the conditional expectation of the squared error incurred by $\calA$ is lower bounded by $\Omega(1/[\mprime(\calL)]^2)$, and the theorem would then follow from the law of total expectation.

    Recall that $t \in \calT$ must be of form $l_1 + l_2 + \cdots + l_{i_0 - 1}$ for some $i_0 \in [m]$. Then, $X_{t,w}$ can be equivalently written as $X_{t,w} = \sum_{i=i_0}^{m}\alpha_i \cdot \mu_i$, where $\alpha_i \coloneqq \frac{1}{w}|B_i \cap [t + 1, t + w]|$ denotes the fraction of timesteps in $[t + 1, t + w]$ that fall into block $B_i$. Since we sample $\mu_1, \ldots, \mu_m$ independently, conditioning on event $\event_{t,w}$ and the observations $x_{1:t}$---both of which are solely determined by the randomness in $\mu_1, \mu_2, \ldots, \mu_{i_0 - 1}$ and $\calA$---each of $\mu_{i_0}, \mu_{i_0 + 1}, \ldots, \mu_m$ still follows $\Bern(1/2)$ independently and has a variance of $1/4$. Therefore, the conditional variance of $X_{t,w}$ is given by
    \[
        \Var{}{X_{t,w} \mid \event_{t,w}, x_{1:t}}
    =   \Var{}{\sum_{i=i_0}^m\alpha_i\cdot \mu_i \mid \event_{t,w}, x_{1:t}}
    =   \frac{1}{4}\sum_{i=i_0}^m\alpha_i^2.
    \]
    By \Cref{lemma:block-overlap}, there exists $i \in \{i_0, i_0 + 1, \ldots, m\}$ such that $\alpha_i \ge \frac{1}{2\mprime(\calL)}$, so $\Var{}{X_{t,w} \mid \event_{t,w}, x_{1:t}}$ is at least $\frac{1}{4}\cdot\left(\frac{1}{2\mprime(\calL)}\right)^2 = \frac{1}{16[\mprime(\calL)]^2}$.
\end{proof}

\subsection{Hard Instance via Tree Construction}
\begin{theorem}[Second part of \Cref{thm:lower}]\label{thm:lower-second}
    For every PLS instance $\calL = (l_1, l_2, \ldots, l_m)$, every forecasting algorithm $\calA$ has a worst-case error of $\errw(\calA) \ge \Omega\left(1 / \log m\right)$.
\end{theorem}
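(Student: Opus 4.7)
The plan is to invoke Yao's minimax principle and exhibit a distribution $\mathcal{D}$ over sequences $x \in [0,1]^n$ such that every deterministic forecasting algorithm incurs an expected squared error of $\Omega(1/\log m)$ on $x \sim \mathcal{D}$. The construction generalizes the full-binary-tree construction of~\cite{QV19} (which handles $m$ equal-length blocks) and uses a ternary hierarchical decomposition of the blocks to handle the unequal case.

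First I would recursively build a rooted ternary tree whose leaves, read left to right, are in one-to-one correspondence with the $m$ blocks. The root owns the range $[1, m]$, and each internal node for a range $[a, b]$ of more than one block is split into three contiguous sub-ranges $[a, k_1]$, $[k_1 + 1, k_2]$, $[k_2 + 1, b]$. The two cut points are chosen so that each child owns at most a constant fraction of the blocks of the parent -- which is always possible with three pieces even when one block $l_{b^\star}$ is much longer than its siblings, since one can isolate that block as the middle child $[b^\star, b^\star]$ and let the outer children hold what remains. This yields depth $D = O(\log m)$. To each internal node $v$ I would attach an independent uniform $Z_v \in \{-1, +1\}$ and, for each leaf (block) $\ell$, set the sequence value to be the constant
\[
    \tfrac{1}{2} + \tfrac{1}{2D}\sum_{v \in \mathrm{anc}(\ell)} Z_v \cdot \chi_v(\ell)
\]
on all positions in $B_\ell$, where $\mathrm{anc}(\ell)$ denotes the ancestors of $\ell$ and $\chi_v(\ell) \in \{-1, 0, +1\}$ is a sign assignment (for instance, $+1$ for the left child, $0$ for the middle, $-1$ for the right, possibly composed with an auxiliary shifted signing so that all three children play a symmetric role across the tree). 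The normalization by $1/(2D)$ ensures $x \in [0, 1]$.

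Fix any deterministic forecaster and suppose it predicts $\hat\mu$ at a time $t \in \calT$ over window $w$. Conditioning on the observed prefix $x_{1:t}$ fixes the $Z_v$'s that lie on paths to the first $\ell_0 - 1$ blocks (where $\ell_0$ is the block immediately after $t$), but leaves all remaining $Z_v$'s uniform and independent. The key lemma I would then prove is that for every choice of $(t, w)$ and for every depth $d = 1, 2, \ldots, \Omega(D)$ there is at least one ancestor node of an unseen block at depth $d$ whose $Z_v$ (i) remains independent of $x_{1:t}$ and (ii) appears in the window average $X_{t,w} = \frac{1}{w}\sum_{i=1}^{w}x_{t+i}$ with coefficient at least $\Omega(1/D)$ in magnitude. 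Summing the independent variance contributions across the $\Omega(D)$ useful depths yields
\[
    \Var{}{X_{t,w} \mid x_{1:t}} \ge \Omega(D) \cdot \Omega(1/D^2) = \Omega(1/\log m),
\]
and the law of total variance then lower-bounds the expected squared error by the same quantity, as required.

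The main obstacle is the key lemma above, which couples the combinatorics of the ternary split to the structure of the random process. Two delicate issues arise. First, the split must guarantee $O(\log m)$ depth despite possibly very unbalanced block lengths; the ternary branching -- inessential for equal-length blocks, since two siblings already split them evenly -- becomes crucial precisely because a single dominating block can be isolated as the middle child while the outer children continue to shrink the range geometrically. Second, the sign pattern $\chi_v$ must be chosen so that for every contiguous prediction window the contributions of sibling sub-ranges do not telescope to zero at too many levels simultaneously; this is where the three-way structure provides additional flexibility over the binary case and is, I expect, the most delicate part of the argument.
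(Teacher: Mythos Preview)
Your approach has a genuine gap. The problem is the uniform $1/(2D)$ noise per level combined with a split by block \emph{count}. Take $\calL = (1,\ldots,1,M,1,\ldots,1)$ with $(m-1)/2$ unit blocks on each side of one block of length $M \gg m$. Your ternary rule isolates the long block as the root's middle child, so that leaf sits at depth~$1$ and its value is $\tfrac12 + \tfrac{\chi}{2D}\,Z_r$ for a single Rademacher $Z_r$. A forecaster who at time $0$ predicts $\hat\mu = 1/2$ over the full window then incurs squared error at most $O(1/D^2) + O((m/M)^2) = O(1/\log^2 m)$, regardless of the sign $\chi\in\{-1,0,+1\}$ you place on middle children. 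More generally, your key lemma cannot hold here: whatever signing you pick, every node at depth $d\ge 2$ lives in the left or right subtree, whose total length is $O(m)$, so its coefficient in $X_{0,n}$ is $O\bigl(\tfrac{1}{D}\cdot\tfrac{m}{M}\bigr)$ rather than $\Omega(1/D)$. Only $O(1)$ levels contribute $\Omega(1/D^2)$ variance, and the sum is $O(1/\log^2 m)$. Any leaf isolated near the root suffers the same fate---it accumulates only $O(1)$ increments of size $1/(2D)$---so this is not a corner case but a structural obstruction to the ``sum over $\Omega(D)$ levels'' plan.

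The paper avoids this by decoupling the two roles of the tree. It splits by \emph{total length} (so a child subtree always carries an $\Omega(1)$ fraction of its parent's length, hence an $\Omega(1)$ coefficient in the window average), but sets the noise magnitude at node $v$ to $\sigma(v) = \sqrt{1 - \ln\size(v)/\ln m}$, a function of the \emph{number of leaves}. The variance across a single edge $(u,v)$ is then $\tfrac{\ln\size(u)-\ln\size(v)}{4\ln m}$, which is already $\Omega(1/\log m)$ whenever $\size(v)\le\size(u)/2$; in particular, a long block isolated as a leaf directly under the root gets variance $\tfrac{\ln m}{4\ln m} = 1/4$. The argument then needs only \emph{one} such edge lying in the unseen part of the window (\Cref{lemma:technical}), not a sum across levels.
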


Similar to the proof of \Cref{thm:lower-first}, we randomly generate a sequence $x$ by first choosing $\mu \in [0, 1]^m$, and setting every entry within the $i$-th block to $\mu_i$. The key difference is that, instead of drawing each $\mu_i$ independently, we carefully design the correlation between different entries of $\mu$. This correlation structure is specified by a \emph{tree construction} similar to~\cite{QV19}: We build a tree whose leaves correspond to the $m$ entries $\mu_1, \ldots, \mu_m$. We assign a noise to each edge of the tree, and the value of a leaf is set to the sum of noises on the root-to-leaf path. Again, we will argue that for every possible prediction window $[t+1, t+w]$, even after seeing the first $t$ entries, the conditional variance in the average of $x_{t+1}, \ldots, x_{t+w}$ is still sufficiently high.

A key difference between our construction and that of~\cite{QV19} is that they focused on the special case where $l_1 = l_2 = \cdots = l_m = 1$, so that a full binary tree of depth $\log_2 m$ suffices. In contrast, to handle the general case, our construction involves a ternary tree in which every internal node has either two or three children, depending on whether the current subtree contains a long block whose length dominates the total block length.

We formally introduce our tree construction as follows.
\begin{definition}[Tree construction]
\label{def:tree-construction}
Given a PLS instance $(l_1, l_2, \ldots, l_m)$, we construct a tree using the following recursive procedure:
\begin{itemize}[leftmargin=*]
    \item If $m = 1$, return a tree with a single leaf node that corresponds to $l_1$.
    \item Let $S \coloneqq l_1 + l_2 + \cdots + l_m$. If there exists $\istar \in [m]$ such that $l_{\istar} > S / 2$, such index $\istar$ must be unique, and we construct a ternary tree where: (1) The left subtree of the root node is the tree construction for $(l_1, \ldots, l_{\istar-1})$; (2) The middle subtree is a single leaf node corresponding to the $\istar$-th block; (3) The right subtree is the tree construction for $(l_{\istar+1}, \ldots, l_m)$.
    \item Otherwise, every $l_i$ is at most $S/2$. We choose the cutoff $\istar \in [m]$ as the smallest index such that $l_1 + l_2 + \cdots + l_{\istar} \ge S/4$. Note that we must have $l_1 + l_2 + \cdots + l_{\istar} = (l_1 + l_2 + \cdots + l_{\istar-1}) + l_{\istar} \le S / 4 + S / 2 = 3S/4$. We recursively build two trees for $(l_1, l_2, \ldots, l_{\istar})$ and $(l_{\istar+1}, l_{\istar+2}, \ldots, l_m)$, and return the tree obtained from joining these two subtrees.
\end{itemize}
\end{definition}

By construction, the tree has $m$ leaf nodes, each of which corresponds to one of the $m$ blocks. For each node $v$ in the tree, let $\ind(v)$ denote the set of block indices that correspond to one of the leaves in the subtree rooted at $v$. It is clear that every $\ind(v)$ is of form $\{i, i + 1, \ldots, j\}$ for some $1 \le i \le j \le m$. We write $\size(v) \coloneqq |\ind(v)|$ as the number of leaves in the subtree rooted at $v$.

Next, we assign a noise magnitude to each node in the tree.
\begin{definition}
\label{def:noise-magnitudes}
    The noise magnitude at node $v$ is set to $\sigma(v) \coloneqq \sqrt{1 - \frac{\ln(\size(v))}{\ln m}}$.
\end{definition}
The noise magnitude is always in $[0, 1]$: it takes value $0$ at the root node and takes value $1$ at every leaf node. Then, we assign random values in $[0, 1]$ to the nodes in the tree construction as follows.

\begin{definition}[Node value]
\label{def:node-values}
    The root node $r$ is assigned value $\mu_r = 1/2$ deterministically. Then, for every edge $(u, v)$ in the tree, after $\mu_u$ is determined, we independently draw $\mu_v$ such that:
    \[
        \mu_v \in \left\{\frac{1 - \sigma(v)}{2}, \frac{1 + \sigma(v)}{2}\right\}
    \quad\text{and}\quad
        \Ex{}{\mu_v \mid \mu_u} = \mu_u.
    \]
\end{definition}
Note that the above is well-defined: By \Cref{def:noise-magnitudes}, it always holds that $\sigma(u) < \sigma(v)$. So, regardless of whether $\mu_u$ equals $\frac{1 - \sigma(u)}{2}$ or $\frac{1 + \sigma(u)}{2}$, we always have $\mu_u \in \left[\frac{1 - \sigma(v)}{2}, \frac{1 + \sigma(v)}{2}\right]$. Therefore, there exists a unique distribution over $\left\{\frac{1 - \sigma(v)}{2}, \frac{1 + \sigma(v)}{2}\right\}$ with an expectation of $\mu_u$.

Finally, we construct the hard instance by setting the value of each block to the corresponding node value in the tree construction.

\begin{definition}[Hard instance]
\label{def:hard-instance}
    Given a PLS instance $\calL = (l_1, l_2, \ldots, l_m)$, we construct a tree following \Cref{def:tree-construction} and assign values to its nodes following \Cref{def:noise-magnitudes,def:node-values}. For each $i \in [m]$, let $\mu_i$ denote the value of the leaf node that corresponds to the $i$-th block. Finally, the sequence $x$ consists of $l_1$ copies of $\mu_1$, $l_2$ copies of $\mu_2$, $\ldots$, $l_m$ copies of $\mu_m$ in order.
\end{definition}

\subsection{Structural Properties}
Towards proving \Cref{thm:lower-second} using the hard instance from \Cref{def:hard-instance}, we make some observations on the tree structure as well as the node values. We first note that, for each edge $(u, v)$ in the tree, the conditional variance of $\mu_v$ given $\mu_u$ takes a simple form. Indeed, this is the main motivation behind the choices of the noise magnitudes and node values.

\begin{lemma}
\label{lemma:conditional-var}
    For every edge $(u, v)$ in the tree and conditioning on any realization of $\mu_u$, it holds that $\Var{}{\mu_v \mid \mu_u} = \frac{\ln(\size(u)) - \ln(\size(v))}{4\ln m}$.
\end{lemma}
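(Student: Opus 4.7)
The plan is to carry out a direct calculation after a convenient centering. Let $\widetilde{\mu}_v \coloneqq \mu_v - 1/2$ for every node $v$. By \Cref{def:node-values}, $\widetilde{\mu}_v$ takes values in $\{-\sigma(v)/2,\, +\sigma(v)/2\}$ and satisfies $\Ex{}{\widetilde{\mu}_v \mid \widetilde{\mu}_u} = \widetilde{\mu}_u$. The first key observation is that because $\widetilde{\mu}_v^2$ is deterministically equal to $\sigma(v)^2/4$, one has
\[
\Ex{}{\widetilde{\mu}_v^2 \mid \widetilde{\mu}_u} = \frac{\sigma(v)^2}{4}
\]
independently of the realization of $\widetilde{\mu}_u$. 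Hence
\[
\Var{}{\mu_v \mid \mu_u} = \Var{}{\widetilde{\mu}_v \mid \widetilde{\mu}_u} = \frac{\sigma(v)^2}{4} - \widetilde{\mu}_u^2.
\]

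The second key observation is that the subtracted term also has no randomness: applying the same reasoning to $u$ gives $\widetilde{\mu}_u^2 = \sigma(u)^2/4$ almost surely when $u$ is not the root, and when $u$ is the root one has $\widetilde{\mu}_u = 0$ while $\sigma(u) = \sqrt{1 - \ln m / \ln m} = 0$, so the identity $\widetilde{\mu}_u^2 = \sigma(u)^2/4$ holds in either case. Substituting,
\[
\Var{}{\mu_v \mid \mu_u} = \frac{\sigma(v)^2 - \sigma(u)^2}{4}.
\]

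Finally, plugging in the definition $\sigma(w)^2 = 1 - \ln(\size(w))/\ln m$ from \Cref{def:noise-magnitudes} yields
\[
\sigma(v)^2 - \sigma(u)^2 = \frac{\ln(\size(u)) - \ln(\size(v))}{\ln m},
\]
which gives the claimed formula. There is no serious obstacle here; the only point worth flagging is the mild check that $\widetilde{\mu}_u^2$ is constant even at the root, which is exactly why \Cref{def:noise-magnitudes} was calibrated so that $\sigma(r) = 0$.
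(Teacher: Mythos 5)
Your proof is correct and takes essentially the same route as the paper: both center at $1/2$, observe that $|\mu_v - 1/2| = \sigma(v)/2$ holds deterministically (including at the root, where both sides vanish), and reduce the conditional variance to $[\sigma(v)^2 - \sigma(u)^2]/4$ before substituting \Cref{def:noise-magnitudes}. The extra remark about the root node is a nice explicit check, but the paper's statement that $|\mu_u - 1/2| = \sigma(u)/2$ always holds already covers it.
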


\begin{proof}
    Since adding a constant does not change the variance, $\Var{}{\mu_v \mid \mu_u} = \Var{}{\mu_v - 1/2 \mid \mu_u} = \Ex{}{(\mu_v - 1/2)^2 \mid \mu_u} - \left[\Ex{}{\mu_v - 1/2 \mid \mu_u}\right]^2$. By \Cref{def:node-values}, we have $\Ex{}{\mu_v \mid \mu_u} = \mu_u$, so the second term $\left[\Ex{}{\mu_v - 1/2 \mid \mu_u}\right]^2$ reduces to $(\mu_u - 1/2)^2$. Then, we note that $|\mu_u - 1/2| = \sigma(u) / 2$ and $|\mu_v - 1/2| = \sigma(v) / 2$ always hold, which further simplifies $\Var{}{\mu_v \mid \mu_u}$ into $[\sigma(v)]^2/4 - [\sigma(u)]^2/4$. Finally, the lemma follows from the choices of of $\sigma(u)$ and $\sigma(v)$ in \Cref{def:noise-magnitudes}.
\end{proof}

The next technical lemma (which we prove in \Cref{sec:lower-app}) states that, for any $1 \le i \le j \le m$, there exists an edge $(u, v)$ in the tree such that: (1) Observing the first $i - 1$ blocks does not reveal the value of $\mu_v$; (2) The remaining variance of $\mu_v$ has a significant contribution to the average of blocks $i, i+1, \ldots, j$. To state the lemma succinctly, we define the ``total length'' of a set $S$. We will mostly use this notation for $S = \ind(v)$ (where $v$ is a node in the tree) or $S = [i, j]$ (where $1 \le i \le j \le m$).

\begin{definition}
    The total length of set $S$ is $\totlen(S) \coloneqq \sum_{i=1}^{m}l_i \cdot \1{i \in S}$.
\end{definition}

\begin{restatable}{lemma}{lemmatechnical}
\label{lemma:technical}
    For any $1 \le i \le j \le m$, there exists an edge $(u, v)$ in the tree such that: (1) $\ind(v) \cap \{1, 2, \ldots, i - 1\} = \emptyset$; (2) $\totlen(\ind(v)) \ge \Omega(1) \cdot \totlen([i, j])$; (3) $\size(v) \le \size(u) / 2$.
\end{restatable}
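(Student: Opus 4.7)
The plan is to examine the lowest common ancestor $u^*$ of the $i$-th and $j$-th leaves in the tree and find a suitable edge either at $u^*$ itself or exactly one level below. (The degenerate case $i = j$ is handled by taking the parent-edge of leaf $i$, which trivially meets all three conditions.) Writing $\ind(u^*) = [a^*, b^*]$ with $a^* \le i \le j \le b^*$, and $S^* = \totlen(\ind(u^*))$, we automatically get $\totlen([i,j]) \le S^*$.

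If $u^*$ is a heavy-block split at some $\istar$, its middle child $M = \{\istar\}$ is what separates the $i$-th and $j$-th leaves in the recursion, so $i \le \istar \le j$. Then the edge $(u^*, M)$ meets all three properties directly: $\ind(M) \subseteq [i, m]$, $\totlen(M) = l_{\istar} > S^*/2 \ge \totlen([i,j])/2$, and $\size(M) = 1 \le \size(u^*)/2$ since $\size(u^*) \ge 2$. If instead $u^*$ is a balanced split at $\istar^*$ with children $L = [a^*, \istar^*]$ and $R = [\istar^*+1, b^*]$, then $\totlen(L), \totlen(R) \in [S^*/4, 3S^*/4]$. The right child already satisfies (1) and (2), since $\ind(R) \subseteq [i+1, m]$ and $\totlen(R) \ge S^*/4 \ge \totlen([i,j])/4$; so if $\size(R) \le \size(u^*)/2$, the edge $(u^*, R)$ works. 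And if $a^* = i$, then $L$ too satisfies (1), so when $\size(R) > \size(u^*)/2$ forces $\size(L) < \size(u^*)/2$, we use $(u^*, L)$.

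The only remaining case -- a balanced split at $u^*$ with $a^* < i$ (so $L$ fails (1)) and $\size(R) > \size(u^*)/2$ (so $R$ fails (3)) -- is the main obstacle; the key observation is that it can always be resolved by descending exactly one more level into $R$. Note that $R$ must be internal, because $\size(R) > \size(L) \ge 1$ forces $\size(R) \ge 2$. If $R$'s own split is heavy at some $\istar_R$, the edge to its middle leaf $M_R$ works, since $\ind(M_R) = \{\istar_R\} \subseteq \ind(R) \subseteq [i,m]$, $l_{\istar_R} > \totlen(R)/2 \ge S^*/8 \ge \totlen([i,j])/8$, and $\size(M_R) = 1 \le \size(R)/2$. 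If $R$'s split is balanced, both of its children lie in $[i, m]$ and each has total length at least $\totlen(R)/4 \ge \totlen([i,j])/16$; their sizes sum to $\size(R)$, so the smaller one has size at most $\size(R)/2$ and yields the required edge. The worst-case constant produced by this analysis is $1/16$, giving the claimed $\Omega(1)$ bound.
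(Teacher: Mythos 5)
Your proof correctly establishes the lemma \emph{as literally stated}, and with a cleaner, more local argument: you only ever look at the LCA $u^*$ of leaves $i$ and $j$ and at most one level below it, whereas the paper first descends (potentially many levels) to the LCA $v_1$ of $\{i',\ldots,j\}$ inside the relevant subtree of $u^*$, and then possibly one more level into $v_1$'s left child. Your worst-case constant ($1/16$) is also a bit better than the paper's ($1/32$). The key trick you exploit --- that in a balanced split \emph{both} children carry at least a quarter of the parent's $\totlen$, so you never need to track how $[i,j]$'s mass is divided between the two children --- is a genuinely different and simpler observation than the paper's.

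However, there is a real gap relative to how the paper actually uses this lemma, and it is instructive to see why the paper's more convoluted route is deliberate. The paper's proof silently establishes a strictly stronger conclusion than what the lemma statement says: in every case it produces $\ind(v) \subseteq [i,j]$, not merely $\ind(v)\cap\{1,\ldots,i-1\}=\emptyset$. This stronger property is what the downstream proof of the $\Omega(1/\log m)$ lower bound (Case~2 in the proof of \Cref{thm:lower-second}) relies on: the step ``within the length-$w$ prediction window, the number of entries affected by $\mu_v$ is \emph{exactly} $\totlen(\ind(v))$'' requires every block in $\ind(v)$ to lie entirely inside the window, i.e.\ $\ind(v)\subseteq[i_0,j_0-1]$. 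Your proof, by contrast, uses the right child $R = [\istar^*{+}1, b^*]$ (or a descendant of $R$) in several subcases, and $b^*$ can exceed $j$; in that event $\totlen(\ind(v)\cap[i,j]) = \totlen([\istar^*{+}1,j])$, which can be an arbitrarily small fraction of $\totlen(\ind(v))$, and the variance argument collapses. The paper avoids this by descending inside the right subtree to the deepest node $v_1$ that still contains $\{i',\ldots,j\}$ and then taking the \emph{left} child of $v_1$ (symmetrically, the right child when working inside $u_2$): by maximality of $v_1$'s depth that child is automatically contained in $[i',j]\subseteq[i,j]$. To salvage your local approach you would need to step consistently toward the side still constrained by $j$ (not simply toward the right child), and rewrite the bound on $\totlen$ accordingly; as written, your argument proves the stated claim but not the version the paper actually needs.
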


\subsection{Lower Bound in Terms of Number of Blocks}
We prove \Cref{thm:lower-second} by putting together \Cref{lemma:conditional-var,lemma:technical}. As in the proof of \Cref{thm:lower-first}, we can decompose any prediction window $[t + 1, t + w]$ into several complete blocks $i_0, i_0 + 1, \ldots, j_0 - 1$ and a possibly incomplete block $j_0$. If the complete blocks constitute at least half of the window, we apply \Cref{lemma:technical} to identify an edge $(u, v)$ in the tree, such that the noise in $\mu_v \mid \mu_u$ contributes an $\Omega(1/\log m)$ variance to the average to be predicted. Otherwise, we lower bound the variance directly by considering the edge above the leaf that corresponds to block~$j_0$.

\begin{proof}[Proof of \Cref{thm:lower-second}]
    We consider the random sequence $x \in \{0, 1\}^n$ constructed in \Cref{def:hard-instance} and fix a forecasting algorithm $\calA$. For $t \in \calT$ and $w \in [n - t]$, let $\event_{t,w}$ denote the event that $\calA$ makes a prediction at time $t$ on $X_{t,w} \coloneqq \frac{1}{w}\sum_{i=1}^{w}x_{t+i}$. We will show that, conditioning on any event $\event_{t,w}$ as well as the observations $x_{1:t} = (x_1, x_2, \ldots, x_t)$, the conditional variance of $X_{t,w}$ is at least $\Omega(1/\log m)$. This would lower bound the conditional expectation of the squared error incurred by $\calA$ by $\Omega(1/\log m)$, and the theorem would then follow from the law of total expectation.

    Recall that $t \in \calT$ must be of form $l_1 + l_2 + \cdots + l_{i_0 - 1}$ for some $i_0 \in [m]$. Let $j_0 \in [m]$ be the smallest number such that $l_{i_0} + l_{i_0 + 1} + \cdots + l_{j_0} \ge w$. Let $\delta \coloneqq w - (l_{i_0} + l_{i_0 + 1} + \cdots + l_{j_0 - 1})$. Consider the following two cases, depending on whether $\delta$ exceeds half of the window length $w$:
    \begin{itemize}[leftmargin=*]
        \item \textbf{Case 1: $\delta \ge w/2$.} Let $v$ be the leaf that corresponds to the $j_0$-th block in the tree construction, and $u$ be the parent of $v$. Note that $\size(v) = 1$ and $\size(u) \ge 2$. By \Cref{lemma:conditional-var}, $\Var{}{\mu_v \mid \mu_u}$ is given by $\frac{\ln(\size(u)) - \ln(\size(v))}{4\ln m}
        \ge \frac{\ln 2}{4\ln m}
        =   \Omega\left(\frac{1}{\log m}\right)$.
        Furthermore, since event $\event_{t,w}$ and $x_{1:t}$ only depend on the realization of $\mu_1, \mu_2, \ldots, \mu_{i_0 - 1}$, the value of $\mu_{j_0}$ (namely, $\mu_v$) still has an $\Omega(1/\log m)$ variance conditioning on $\event_{t,w}$ and $x_{1:t}$. Then, since $\delta \ge w/2$ entries among $x_{t+1}, x_{t+2}, \ldots, x_{t+w}$ are set to $\mu_{j_0}$, $\Var{}{X_{t,w} \mid \event_{t,w}, x_{1:t}}$ is at least $\frac{1}{4} \cdot \Var{}{\mu_{j_0} \mid \event_{t,w}, x_{1:t}}
        \ge \Omega(1/\log m)$.
        
        \item \textbf{Case 2: $\delta < w/2$.} In this case, we have $\totlen([i_0, j_0 - 1]) = l_{i_0} + l_{i_0 + 1} + \cdots + l_{j_0 - 1} = w - \delta \ge w/2$. Applying \Cref{lemma:technical} with $i = i_0$ and $j = j_0 - 1$ gives an edge $(u, v)$ such that: (1) $\ind(v) \cap \{1, 2, \ldots, i_0 - 1\} = \emptyset$; (2) $\totlen(\ind(v)) \ge \Omega(1) \cdot \totlen([i_0, j_0 - 1])$; (3) $\size(v) \le \size(u) / 2$.

        By \Cref{lemma:conditional-var}, $\Var{}{\mu_v \mid \mu_u}$ is equal to $\frac{\ln(\size(u)) - \ln(\size(v))}{4\ln m} \ge \frac{\ln 2}{4\ln m} = \Omega(1/\log m)$. Since event $\event_{t,w}$ and the observation of $x_{1:t}$ only depend on the realization of $\mu_1, \mu_2, \ldots, \mu_{i_0 - 1}$, and none of these $i_0 - 1$ leaves is inside the subtree rooted at $v$, $\mu_v$ still has an $\Omega(1/\log m)$ conditional variance. Furthermore, within the length-$w$ prediction window, the number of entries that are affected by $\mu_v$ is exactly $\totlen(\ind(v)) \ge \Omega(1) \cdot \totlen([i_0, j_0 - 1]) \ge \Omega(1) \cdot w$. Therefore, the conditional variance of $X_{t,w}$ is at least $\Omega(1) \cdot \Var{}{\mu_v \mid \event_{t,w}, x_{1:t}}
        \ge \Omega(1/\log m)$.
    \end{itemize}
\end{proof}

\section{An Average-Case Analysis}\label{sec:average-case}
As a warm-up towards proving \Cref{thm:average-case}, we analyze the special case that $\pstar$ consists of $n$ identical entries.

\begin{proposition}\label{prop:constant-probability}
    Let $\calT$ be a $\pstar$-random stopping time set for $\pstar = (p, p, \ldots, p) \in [0, 1]^n$. With probability at least $1 - e^{-np/3} - 1/n$ over the randomness in $\calT$: (1) $|\calT| \le 2np = O(np)$; (2) $\mprime(\calT) \ge n / \lceil(2\ln n)/p\rceil - 1 = \Omega(np/\log n)$.
\end{proposition}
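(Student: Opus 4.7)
The two parts of the proposition are essentially independent concentration statements about the $\Bern(p)$ indicators $X_0, X_1, \ldots, X_{n-1}$ defining $\calT$, so the plan is to establish each with a tail bound and then union-bound.

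For part (1), I would observe that $|\calT| = \sum_{j=0}^{n-1} X_j$ follows $\Bin(n, p)$, so $\Ex{}{|\calT|} = np$. A standard multiplicative Chernoff bound (the one giving $\Pr[Y \ge (1+\delta)\mu] \le e^{-\delta^2 \mu/3}$ for $\delta \in (0,1]$, applied with $\delta = 1$) yields $\pr{}{|\calT| > 2np} \le e^{-np/3}$. This takes care of the first conclusion.

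For part (2), set $L \coloneqq \lceil (2\ln n)/p\rceil$. The key auxiliary event I would track is
\[
\event \coloneqq \bigl\{\text{every $L$ consecutive positions in } \{0,1,\ldots,n-1\} \text{ contain at least one element of } \calT\bigr\}.
\]
For each fixed starting position $i \in \{0, 1, \ldots, n-L\}$, the probability that $X_i = X_{i+1} = \cdots = X_{i+L-1} = 0$ is $(1-p)^L \le e^{-pL} \le e^{-2\ln n} = 1/n^2$. A union bound over the at most $n$ starting positions gives $\Pr[\event^c] \le 1/n$.

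Next I would argue that on the event $\event$, both $t_1 \le L-1$ and $\max_i l_i \le L$ hold. The first is immediate: positions $0,1,\ldots,L-1$ contain some element of $\calT$, so $t_1 \le L-1$ and therefore $\sum_{i=1}^{m} l_i = n - t_1 \ge n - L + 1$. For the second, a block length $l_i > L$ (for $i < m$) would force $L$ consecutive zero positions strictly between $t_i$ and $t_{i+1}$, and $l_m > L$ would force $L$ consecutive zero positions at the suffix $t_m+1, \ldots, t_m+L \le n-1$; both contradict $\event$. Combining these two consequences,
\[
\mprime(\calT) \ge \frac{l_1 + l_2 + \cdots + l_m}{\max\{l_1,\ldots,l_m\}} \ge \frac{n - L + 1}{L} \ge \frac{n}{L} - 1,
\]
where the first inequality uses the $i=1, j=m$ choice in \Cref{def:approximate-uniformity}. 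A final union bound combines this event (failing with probability $\le 1/n$) with the Chernoff failure event (probability $\le e^{-np/3}$).

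There is no substantive obstacle here, only a careful calibration of $L$: the choice $L = \lceil (2\ln n)/p\rceil$ is made precisely so that $(1-p)^L \le 1/n^2$, which is the slack needed to union-bound over $n$ starting positions and still retain the $1/n$ failure probability advertised in the statement. The asymptotic form $\mprime(\calT) = \Omega(np/\log n)$ then follows because $n/L \ge np/(2\ln n + p)$.
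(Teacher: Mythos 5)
Your proposal is correct and takes essentially the same route as the paper's own proof: the same Chernoff bound for part (1), the same choice $L = \lceil(2\ln n)/p\rceil$, the same ``no $L$ consecutive zeros'' event controlled by a union bound over $\le n$ starting positions each failing with probability $(1-p)^L \le 1/n^2$, and the same conclusion that $\mprime(\calT) \ge (l_1 + \cdots + l_m)/\max_i l_i \ge n/L - 1$ using $\sum_i l_i = n - \min\calT$. The only cosmetic difference is that you phrase the good event directly (every window of $L$ consecutive indices hits $\calT$) and then derive $t_1 \le L - 1$ and $\max_i l_i \le L$ as consequences, whereas the paper argues from the complementary event; the underlying union bound is identical.
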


\begin{proof}
    We first note that $|\calT|$ follows $\Bin(n, p)$, so a multiplicative Chernoff bound gives $\pr{}{|\calT| > 2np} \le e^{-np/3}$. Towards lower bounding $\mprime(\calT)$, let $\calL = (l_1, l_2, \ldots, l_m)$ be the block representation of $\calT$ and let $L_0 \coloneqq \lceil(2\ln n) / p\rceil$. By definition, $\mprime(\calT)$ is at least $\frac{l_1 + l_2 + \cdots + l_m}{\max\{l_1, l_2, \ldots, l_m\}} = \frac{n - \min \calT}{\max\{l_1, l_2, \ldots, l_m\}}$. If we additionally have $\max\{l_1, l_2, \ldots, l_m\} \le L_0$ and $\min\calT \le L_0$, we would have $\mprime(\calT)
    \ge \frac{n - L_0}{L_0}
    = n / \lceil(2\ln n)/p\rceil - 1$. Thus, it remains to show that, with probability at least $1 - 1/n$, both $\max\{l_1, l_2, \ldots, l_m\} \le L_0$ and $\min\calT \le L_0$ hold.
    
    Consider the complementary event: for either $\max\{l_1, l_2, \ldots, l_m\} > L_0$ or $\min\calT > L_0$ to hold, there must be some $t \in \{0, 1, \ldots, n - L_0 - 1\}$ such that $\calT \cap [t + 1, t + L_0] = \emptyset$. For each fixed $t$, $t+1, t+2, \ldots, t + L_0$ get included in $\calT$ independently with probability $p$. Thus, $\calT \cap [t + 1, t + L_0] = \emptyset$ holds with probability $(1 - p)^{L_0}$. By the union bound, $\pr{}{\max\{l_1, l_2, \ldots, l_m\} > L_0 \vee \min\calT > L_0}$ is at most $(n - L_0) \cdot (1 - p)^{L_0}$, which is further upper bounded by $ne^{-pL_0} \le n e^{-2\ln n} = 1/n$ using $1 - p \le e^{-p}$ and $L_0 \ge (2\ln n) / p$. This completes the proof.
\end{proof}

Now we prove \Cref{thm:average-case}.
Compared to the proof for constant probability sequences (\Cref{prop:constant-probability}), our analysis essentially reduces the $k$-monotone case to the constant $\pstar$ case by showing that every $k$-monotone $\pstar$ contains a sufficiently long subsequence, such that the length of the subsequence times the minimum value almost matches $\sum_{t=0}^{n-1}\pstar_t$.

\begin{lemma}
    \label{lemma:k-monotone-subsequence}
    For any $k$-monotone sequence $\pstar = (\pstar_0, \pstar_1, \ldots, \pstar_{n-1}) \in [0, 1]^n$, there exists a contiguous subsequence $(\pstar_i, \pstar_{i+1}, \ldots, \pstar_j)$ such that
    \[
        (j - i + 1) \cdot \min\{\pstar_i, \pstar_{i+1}, \ldots, \pstar_j\} \ge \frac{\sum_{t=0}^{n-1}\pstar_t}{O(k\log n)}.
    \]
\end{lemma}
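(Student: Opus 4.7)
The plan is to reduce \Cref{lemma:k-monotone-subsequence} to the monotone case by a pigeonhole over the $k$ monotone pieces, and then to handle the monotone case with a standard dyadic decomposition.

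For the reduction, I would decompose $\pstar$ into at most $k$ maximal monotone contiguous pieces, as guaranteed by $k$-monotonicity. Writing $S \coloneqq \sum_{t=0}^{n-1}\pstar_t$, these pieces partition $S$ into at most $k$ nonnegative summands, so by averaging there is some piece $(\pstar_a, \pstar_{a+1}, \ldots, \pstar_{a+\ell-1})$ with sum $\Sigma \ge S/k$. By reversing the index order if necessary, I may assume this piece is non-increasing; a contiguous subsequence found in the reversal corresponds after re-reversal to the same contiguous interval in $\pstar$, which will justify the symmetry.

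For the monotone case, set $q_t \coloneqq \pstar_{a+t}$ so that $q_0 \ge q_1 \ge \cdots \ge q_{\ell-1}$ has sum $\Sigma$. I would partition its indices into dyadic groups $G_0 = \{0\}$ and $G_r = \{2^{r-1}, 2^{r-1}+1, \ldots, 2^r - 1\}$ for $r = 1, 2, \ldots, R$ with $R = \lceil \log_2 \ell \rceil = O(\log n)$. Since $q$ is non-increasing, every entry of $G_r$ is at most $q_{2^{r-1}}$, which gives
\[
    \Sigma \;\le\; q_0 \;+\; \sum_{r=1}^{R} 2^{r-1}\, q_{2^{r-1}}.
\]
Pigeonhole over the $R + 1 = O(\log n)$ terms on the right-hand side produces a term of magnitude at least $\Sigma / O(\log n)$. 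If this heavy term is $q_0$, the length-one subsequence at index $a$ already has length-times-min equal to $q_0 \ge \Sigma / O(\log n)$. Otherwise, some $r \ge 1$ satisfies $2^{r-1}\, q_{2^{r-1}} \ge \Sigma / O(\log n)$, and the prefix $(q_0, q_1, \ldots, q_{2^{r-1}})$ of the piece has length $2^{r-1} + 1$ and minimum $q_{2^{r-1}}$, yielding length-times-min at least $2^{r-1}\, q_{2^{r-1}} \ge \Sigma / O(\log n) \ge S / O(k \log n)$, as required.

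There is no real obstacle in this argument: the two pigeonholes compose multiplicatively to give the denominator $O(k \log n)$, and the only care is bookkeeping, namely (i) locating the heavy piece inside $\pstar$ and (ii) checking that the reversal used in the non-decreasing case still produces a valid contiguous subsequence of $\pstar$ (one simply takes the corresponding suffix of the piece, $(\pstar_{a + \ell - 1 - 2^{r-1}}, \ldots, \pstar_{a + \ell - 1})$, which is contiguous in $\pstar$ and has the same length and minimum).
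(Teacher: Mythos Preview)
Your proof is correct and follows the same overall two-step structure as the paper: first pigeonhole over the $k$ monotone pieces to reduce to a single monotone subsequence with sum at least $S/k$, then extract a good contiguous interval from that piece. The difference lies in how you handle the monotone step. The paper, working with a non-decreasing piece $(\pstar_{i_0},\ldots,\pstar_{j_0})$, argues by contradiction that some suffix $[i,j_0]$ must satisfy $(j_0-i+1)\pstar_i \ge S/H_n$: if every such suffix failed, summing $\pstar_i < \frac{S}{H_n(j_0-i+1)}$ over $i$ would give $S < S$. You instead use a dyadic grouping of the indices and a pigeonhole over the $O(\log n)$ groups. Both arguments are short and standard; the paper's harmonic-sum contradiction is marginally cleaner and yields the explicit constant $H_n$, while your dyadic version is more constructive (it directly names the candidate intervals as dyadic prefixes) and may feel more familiar to readers used to level-set or bucketing arguments. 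Either way the final bound is $S/O(k\log n)$, and your handling of the reversal bookkeeping is fine.
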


\begin{proof}
    By definition of $k$-monotone sequences, $\pstar$ can be partitioned into at most $k$ contiguous subsequences, each of which is monotone. Therefore, there exist $0 \le i_0 \le j_0 \le n - 1$ such that: (1) $(\pstar_{i_0}, \pstar_{i_0 + 1}, \ldots, \pstar_{j_0})$ is monotone; (2) $S \coloneqq \sum_{t=i_0}^{j_0}\pstar_t \ge \frac{1}{k}\sum_{t=0}^{n-1}\pstar_t$.

    Without loss of generality, we assume that $(\pstar_{i_0}, \pstar_{i_0 + 1}, \ldots, \pstar_{j_0})$ is non-decreasing; the non-increasing case can be handled symmetrically. We claim that there exists $i \in \{i_0, i_0 + 1, \ldots, j_0\}$ such that
    \[
        (j_0 - i + 1) \cdot \pstar_i \ge \frac{S}{H_n},
    \]
    where $H_n \coloneqq \frac{1}{1} + \frac{1}{2} + \cdots + \frac{1}{n} = O(\log n)$. Assuming this claim, $(\pstar_i, \pstar_{i+1}, \ldots, \pstar_{j_0})$ gives the desired subsequence, as it is non-decreasing and satisfies
    \[
        (j_0 - i + 1) \cdot \min\{\pstar_i, \pstar_{i+1}, \ldots, \pstar_{j_0}\}
    =   (j_0 - i + 1) \cdot \pstar_i
    \ge \frac{S}{H_n}
    \ge \frac{\sum_{t=0}^{n-1}\pstar_t}{O(k\log n)}.
    \]

    To prove this claim, suppose towards a contradiction that, for every $i \in \{i_0, i_0 + 1, \ldots, j_0\}$, it holds that $(j_0 - i + 1) \cdot \pstar_i < \frac{S}{H_n}$. It then follows that
    \[
        S
    =   \sum_{i=i_0}^{j_0}\pstar_i
    <   \sum_{i=i_0}^{j_0}\left(\frac{S}{H_n} \cdot \frac{1}{j_0 - i + 1}\right)
    =   \frac{S}{H_n}\cdot H_{j_0 - i_0 + 1}
    \le S,
    \]
    a contradiction.
\end{proof}

Finally, we prove \Cref{thm:average-case} using \Cref{lemma:k-monotone-subsequence} and an argument similar to that of \Cref{prop:constant-probability}.

\begin{proof}[Proof of \Cref{thm:average-case}]
    We start with the high-probability upper bound on $|\calT|$. Let $m_0 \coloneqq \sum_{t=0}^{n-1}\pstar_t$. Since $|\calT|$ is the sum of $n$ independent Bernoulli random variables with means $\pstar_0, \pstar_1, \ldots, \pstar_{n-1}$, we have $\Ex{}{|\calT|} = m_0$, and a multiplicative Chernoff bound gives
    \[
        \pr{}{|\calT| > 2m_0} \le e^{-m_0 / 3}.
    \]

    To lower bound $\mprime(\calT)$, we apply \Cref{lemma:k-monotone-subsequence} to obtain a contiguous subsequence $(\pstar_{i_0}, \pstar_{i_0+1}, \ldots, \pstar_{j_0})$ of $\pstar$ such that
    \[
        (j_0 - i_0 + 1) \cdot \pmin \ge \frac{\sum_{t=0}^{n-1}\pstar_t}{O(k \log n)} = \frac{m_0}{O(k \log n)},
    \]
    where $\pmin \coloneqq \min\{\pstar_{i_0}, \pstar_{i_0+1}, \ldots, \pstar_{j_0}\}$ denotes the minimum entry in the subsequence. Let $L_0 \coloneqq \left\lceil\frac{2\ln n}{\pmin}\right\rceil = \frac{O(k\log^2n)}{m_0}\cdot(j_0 - i_0 + 1)$. Note that we may assume $j_0 - i_0 + 1 \ge 2L_0$ without loss of generality; otherwise, we would have $m_0 = O(k\log^2n)$, in which case the $\Omega(m_0 / (k\log^2n))$ lower bound on $\mprime(\calT)$ would trivially follow from $\mprime(\calT) \ge 1$.

    \paragraph{Good event $\event$.} Next, we define a ``good event'' which will be shown to imply a lower bound on $\mprime(\calT)$ and happen with high probability. For each $t \in \{i_0, i_0 + 1, \ldots, j_0 - L_0 + 1\}$, let $\event_t$ denote the event that $\calT \cap [t, t + L_0 - 1] \ne \emptyset$. Let $\event \coloneqq \bigcap_{t=i_0}^{j_0 - L_0+1}\event_t$ be the intersection of these events. In other words, $\event$ is the event that $\{i_0, i_0 + 1, \ldots, j_0\}$ contains no $L_0$ consecutive elements such that none of them is included in $\calT$.
    
    \paragraph{Event $\event$ implies lower bound on $\mprime$.} When event $\event$ happens, both
    \[
        \calT \cap [i_0, i_0 + L_0 - 1]
    \quad\text{and}\quad
        \calT \cap [j_0 - L_0 + 1, j_0]
    \]
    are non-empty. Thus, if we list the elements in $\calT \cap [i_0, j_0]$ in increasing order:
    \[
        t_0 < t_1 < \cdots < t_{m'},
    \]
    we must have $t_0 \le i_0 + L_0 - 1$ and $t_{m'} \ge j_0 - L_0 + 1$. Furthermore, for every neighboring entries $t_{i-1}$ and $t_i$ ($i \in [m']$), we must have $t_i - t_{i-1} \le L_0$; otherwise, $t_{i-1} + 1, t_{i-1} + 2, \ldots, t_{i-1} + L_0 \in (t_{t-1}, t_i)$ would give $L_0$ consecutive elements that are outside $\calT$, contradicting event $\event$.
    
    Let $\calL = (l_1, l_2, \ldots, l_m)$ be the block representation of instance $\calT$. Note that $\calL$ contains a contiguous subsequence 
    \[
        (t_1 - t_0, t_2 - t_1, \ldots, t_{m'} - t_{m'-1}).
    \]
    that corresponds to the $m' + 1$ stopping times $(t_0, t_1, \ldots, t_{m'})$. Then, event $\event$ implies that every entry $t_i - t_{i-1}$ is at most $L_0$. Furthermore, the sum of these $m'$ entries is given by
    \[
        \sum_{i=1}^{m'}(t_i - t_{i-1})
    =   t_{m'} - t_0
    \ge (j_0 - L_0 + 1) - (i_0 + L_0 - 1)
    =   (j_0 - i_0 + 1) - 2L_0 + 1.
    \]
    Therefore, by definition of $\mprime$, $\event$ implies that
    \[
        \mprime(\calT)
    \ge \frac{\sum_{i=1}^{m'}(t_i - t_{i-1})}{\max\{t_1 - t_0, t_2 - t_1, \ldots, t_{m'} - t_{m'-1}\}}
    \ge \frac{(j_0 - i_0 + 1) - 2L_0 + 1}{L_0}
    \ge \frac{j_0 - i_0 + 1}{L_0} - 2.
    \]
    Finally, since $L_0 = \frac{O(k\log^2n)}{m_0}\cdot(j_0 - i_0 + 1)$, the above gives a lower bound of $\Omega\left(\frac{m_0}{k\log^2n}\right)$.
    
    \paragraph{Event $\event$ happens with high probability.} It remains to show that $\pr{}{\event} \ge 1 - 1/n$. Fix $t \in \{i_0, i_0 + 1, \ldots, j_0 - L_0 + 1\}$. For each $i \in \{0, 1, \ldots, L_0 - 1\}$, we have $t + i \in [i_0, j_0]$, which implies $\pstar_{t+i} \ge \min\{\pstar_{i_0}, \pstar_{i_0 + 1}, \ldots, \pstar_{j_0}\} = \pmin$. It follows that
    \begin{align*}
        \pr{}{\overline{\event_t}}
    &=  \pr{}{\calT \cap [t, t + L_0 - 1] = \emptyset}\\
    &=  \prod_{i=0}^{L_0-1}(1 - \pstar_{t+i})
    \le \prod_{i=0}^{L_0-1}(1 - \pmin) \tag{$\pstar_{t+i} \ge \pmin$}\\
    &=   \exp(-\pmin \cdot L_0)
    \le \frac{1}{n^2} \tag{$1 - p \le e^{-p}$, $L_0 \ge (2\ln n) / \pmin$}.
    \end{align*}
    By the union bound, we have
    \[
        \pr{}{\event}
    \ge 1 - \sum_{t=i_0}^{j_0 - L_0 + 1}\pr{}{\overline{\event_t}}
    \ge 1 - n \cdot \frac{1}{n^2}
    =   1 - \frac{1}{n}.
    \]
    This completes the proof.
\end{proof}

\section{Discussion}\label{sec:discussion}
An obvious open problem is to tighten the instance-dependent error bounds (\Cref{thm:upper,thm:lower}). Since the optimal error is $\Theta(1/\log n)$ for the full-selectivity case~\cite{Drucker13,QV19}, one might hope to strengthen the $1 / [\mprime(\calL)]^2$ lower bound to $1 / \log\mprime(\calL)$, or at least $1 / \polylog(\mprime(\calL))$. Unfortunately, as we show in \Cref{prop:separation} (\Cref{sec:separation}), this is not possible: there exists a family of instances $(\calL_k)_{k=1}^{+\infty}$ such that $\mprime(\calL_k) \to +\infty$ as $k \to +\infty$, but a worst-case error of $O(1/\mprime(\calL_k))$ can be achieved on each $\calL_k$.

Therefore, to obtain tighter instance-dependent bounds, we need to identify a complexity measure that characterizes the hardness of PLS more exactly. A concrete starting point is to examine the instance family from \Cref{prop:separation}, which is based on a construction that resembles the Cantor set. Roughly speaking, these instances suggest that a sharper complexity measure should account for the number of approximately uniform blocks that can be obtained via not only merging consecutive blocks, but also ``skipping'' some shorter blocks at the cost of an additional term in the prediction error.

While we focus on predicting the average of a number sequence, our results can be easily extended to the setting of predicting more general functions (including \emph{smooth} and \emph{concatenation-concave} functions studied by~\cite{QV19}). In particular, they imply an $O(1/\log^{1/2}\mprime(\calL))$ upper bound (on the worst-case \emph{absolute} error) for predicting smooth functions and an $O(1/\log\mprime(\calL))$ bound (on the squared error) for concatenation-concave functions. Since the average function is both smooth and concatenation-concave, the lower bounds in \Cref{thm:lower} also apply to these broader function classes.

Yet another natural direction is to revisit other classic online learning settings (such as the experts problem) from the perspective of limited selectivity, i.e., when the learner is only allowed to change its prediction or action on some given timesteps. The approximate uniformity measure as well as some of our proof techniques would be natural first steps towards understanding these models.

\bibliographystyle{alpha}
\bibliography{main}

\newpage

\appendix
\section{Proofs of Corollaries}\label{sec:cor}
\subsection{Proof of \Cref{cor:lower}}
\corlower*

\begin{proof}
    In light of the $\Omega(1/[\mprime(\calL)]^2)$ lower bound in \Cref{thm:lower}, it suffices to show that each of these instances has an $O(1)$ approximate uniformity.

    \paragraph{The first family.} Fix $m \ge 1$ and consider $\calL_m = (2^0, 2^1, \ldots, 2^{m-1})$. For every $1 \le i \le j \le m$, we have
    \[
        \frac{l_i + l_{i+1} + \cdots + l_j}{\max\{l_i, l_{i+1}, \ldots, l_j\}}
    =   \frac{2^{i-1} + 2^i + \cdots + 2^{j-1}}{2^{j-1}}
    \le \frac{2^j}{2^{j-1}}
    =   2.
    \]
    It follows that $\mprime(\calL_m) \le 2 = O(1)$.

    \paragraph{The second family.} We construct the instances $\calL'_1, \calL'_2, \ldots$ recursively: $\calL'_1$ is set to $(1, 1, 1)$. For each $k \ge 2$, $\calL'_k$ is defined as $\calL'_{k-1} \circ (3^{k-1}) \circ \calL'_{k-1}$, where $\circ$ denotes sequence concatenation. Then, we prove by induction on $k$ that: (1) each $\calL'_k$ corresponds to a PLS instance with $n_k = 3^k$ and $m_k = 2^{k+1}-1$; (2) $\mprime(\calL'_k) \le 3$.

    For the base case that $k = 1$, we indeed have $n_1 = 3 = 3^k$, $m_1 = 3 = 2^{k+1}-1$ and $\mprime(\calL'_1) = 3$. For $k \ge 2$, assuming that the statements hold for $\calL'_{k-1}$, we have $n_k = n_{k-1} + 3^{k-1} + n_{k-1} = 3^k$ and $m_k = m_{k-1} + 1 + m_{k-1} = (2^k - 1) + 1 + (2^k - 1) = 2^{k+1}-1$. To upper bound $\mprime(\calL'_k)$, consider an arbitrary contiguous subsequence $(l_i, l_{i+1}, \ldots, l_j)$ in $\calL'_k$. If this subsequence contains the entry $3^{k-1}$ in the middle, we have
    \[
        \frac{l_i + l_{i+1} + \cdots + l_j}{\max\{l_i, l_{i+1}, \ldots, l_j\}}
    \le \frac{n_k}{3^{k-1}}
    =   \frac{3^k}{3^{k-1}}
    =   3.
    \]
    If the subsequence does not contain the middle entry, it must be a contiguous subsequence of $\calL'_{k-1}$, so the upper bound
    \[
        \frac{l_i + l_{i+1} + \cdots + l_j}{\max\{l_i, l_{i+1}, \ldots, l_j\}} \le 3
    \]
    would follow from the induction hypothesis. This completes the proof.
\end{proof}

\subsection{Proof of \Cref{cor:average-case}}
\Cref{cor:average-case} follows from \Cref{thm:upper,thm:lower,thm:average-case}.

\coraveragecase*

\begin{proof}
    Let $m_0 \coloneqq \sum_{t=0}^{n-1}\pstar_t$. Assuming that $m_0 \ge \Omega((k\log^2n)^{1+c})$, we have $m_0 = \omega(\log n)$ and $k\log^2n = O(m_0^{1/(1+c)})$. By \Cref{thm:average-case}, it holds with probability $1 - e^{-m_0 / 3} - 1/n = 1 - O(1/n)$ that $|\calT| \le O(m_0)$ and
    \[
        \mprime(\calT) \ge \Omega\left(\frac{m_0}{k\log^2n}\right)
    \ge \Omega\left(\frac{m_0}{m_0^{1/(1+c)}}\right)
    =   \Omega(m_0^{c/(1+c)}).
    \]
    By \Cref{thm:upper}, there is a forecasting algorithm with a worst-case error of at most
    \[
        O\left(\frac{1}{\log\mprime(\calT)}\right)
    =   O\left(\frac{1}{\log\left[m_0^{c/(1+c)}\right]}\right)
    =   O\left(\frac{1}{\log m_0}\right)
    \]
    on the PLS instance $\calT$.
    By \Cref{thm:lower}, the upper bound $|\calT| \le O(m_0)$ implies that every forecasting algorithm must have an $\Omega(1/\log m_0)$ worst-case error on instance $\calT$. This completes the proof.
\end{proof}

\section{Proofs for \Cref{sec:upper}}\label{sec:upper-app}

\subsection{Proof of \Cref{prop:uniform}}

\propuniform*

\begin{proof}
    For integer $k \ge 1$ and $\mu \in [0, 1]$, let $L(k, \mu)$ denote the maximum expected squared loss that \Cref{alg:uniform_blocks} incurs on a sequence of $2^k$ blocks with average $\mu$. Let $\alpha \coloneqq \frac{(C+1)^2}{C}$ and $\phi(x) \coloneqq x(1-x)$. We prove by induction that $L(k, \mu) \le \frac{\alpha}{k}\cdot\phi(\mu)$. The proposition would then follow from the above together with the observations that $\alpha = O(C)$, $1/k = O(1/\log m)$, and $\phi(\mu) = O(1)$.
    
    \paragraph{The base case.} When $k = 1$, \Cref{alg:uniform_blocks} calls $\RandomSelect(1, 1)$, which always returns $(i, j) = (2, 1)$. Then, \Cref{alg:uniform_blocks} sets
    \[
        t = l_1, \quad w = l_2, \quad w_0 = l_1, \quad \hat\mu = \frac{1}{l_1}(x_1 + x_2 + \cdots + x_{l_1}),
    \]
    and predict that the average of $x_{l_1 + 1}$ through $x_{l_1 + l_2}$ is equal to $\hat\mu$.
    Then, the resulting error is given by $(\mu_1 - \mu_2)^2$, where
    \[
        \mu_1 \coloneqq \frac{1}{l_1}(x_1 + x_2 + \cdots + x_{l_1})
    \quad\text{and}\quad
        \mu_2 \coloneqq \frac{1}{l_2}(x_{l_1 + 1} + x_{l_1 + 2} + \cdots + x_{l_1 + l_2})
    \]
    are the averages of the two blocks. Let $c \coloneqq l_2 / l_1$. Note that we have
    \[
        l_1 \cdot \mu_1 + l_2 \cdot \mu_2 = x_1 + x_2 + \cdots + x_{l_1 + l_2} = (l_1 + l_2)\cdot \mu.
    \]
    Dividing both sides by $l_1$ gives $\mu_1 + c\mu_2 = (c + 1)\mu$.
    
    Then, the squared error can be upper bounded as follows:
    \[
        L(1, \mu)
    \le \sup_{\substack{\mu_1, \mu_2 \in [0, 1] \\ \mu_1 + c\mu_2 = (c+1)\mu}}  (\mu_1 - \mu_2)^2.
    \]

    For any $\mu_1, \mu_2 \in [0, 1]$, $c > 0$ and $\mu = \frac{1}{1+c}\mu_1 + \frac{c}{1+c}\mu_2$, it holds that
    \begin{equation}\label{eq:quadratic-identity}
        \phi(\mu) = \frac{1}{1+c}\phi(\mu_1) + \frac{c}{1+c}\phi(\mu_2) + \frac{c}{(1+c)^2}(\mu_1 - \mu_2)^2.
    \end{equation}
    Rearranging the above gives
    \[
        (\mu_1 - \mu_2)^2
    =   \frac{(1+c)^2}{c}\cdot\left[\phi(\mu) - \frac{1}{1+c}\phi(\mu_1) - \frac{c}{1+c}\phi(\mu_2)\right]
    \le \frac{(1+c)^2}{c}\phi(\mu),
    \]
    where the second step holds since $\phi(\mu_1)$ and $\phi(\mu_2)$ are non-negative for any $\mu_1, \mu_2 \in [0, 1]$.

    Noting that $c \in [1/C, C]$ and that the function $x \mapsto \frac{(x+1)^2}{x}$ is increasing on $[1, +\infty)$ and decreasing on $(0, 1)$, we obtain the base case
    \[
        L(1,\mu)
    \le \frac{(c+1)^2}{c} \cdot \phi(\mu)
    \le \frac{(C+1)^2}{C} \cdot \phi(\mu)
    =   \frac{\alpha}{1} \cdot \phi(\mu).
    \] 
    \paragraph{Inductive step.} Now, consider the case that $k \ge 2$, assuming that the induction hypothesis holds for $L(k-1, \mu)$. Shorthand $N \coloneqq 2^{k-1}$ for brevity. When \Cref{alg:uniform_blocks} calls $\RandomSelect(1, k)$, with probability $1/k$, the pair $(N + 1, N)$ is returned, causing \Cref{alg:uniform_blocks} to predict that the average of blocks $N + 1$ through $2N$ (denoted by $\mu_2$) is the same as the average of the first $N$ blocks (denoted by $\mu_1$). In this case, the squared loss is given by $(\mu_1-\mu_2)^2$.
    
    With the remaining probability $\frac{k-1}{k}$, $\RandomSelect$ recurses on either the first $N$ blocks or the last $N$ blocks. Note that the resulting behavior of \Cref{alg:uniform_blocks} is exactly identical to when the algorithm runs on either the instance $\calL_1 = (l_1, l_2, \ldots, l_{N})$ or the instance $\calL_2 = (l_{N + 1}, l_{N + 2}, \ldots, l_{2N})$, which can be controlled using $L(k-1, \cdot)$. Let $c \coloneqq \frac{\sum_{i=1}^{N}l_{N+i}}{\sum_{i=1}^{N}l_{i}}$. Note that we have $c \in [1/C, C]$, since the assumption that $\frac{\max\{l_1, l_2, \ldots, l_m\}}{\min\{l_1, l_2, \ldots, l_m\}} \le C$ implies
    \[
    \frac{1}{C} \cdot \sum_{i=1}^{N}l_{i} \leq N \cdot \min\{l_1, l_2, \ldots, l_m\} \leq \sum_{i=1}^{N}l_{N+i} \leq N \cdot \max\{l_1, l_2, \ldots, l_m\} \leq C \cdot \sum_{i=1}^{N}l_{i}.
    \]
    Dividing the above by $\sum_{i=1}^{N}l_{i}$ gives $1/C \le c \le C$.
    The probability of recursing on the first half is given by $(1 - \frac{1}{k}) \cdot \frac{1}{1+c}$, and that of recursing on the second half is $(1 - \frac{1}{k}) \cdot \frac{c}{1+c}$. Then, by the induction hypothesis, the conditional expectation of the squared loss in this case is at most
    \[
        \frac{1}{1+c}L(k-1, \mu_1) + \frac{c}{1+c}L(k-1, \mu_2)
    \le \frac{\alpha}{k-1}\cdot\left(\frac{1}{1+c}\phi(\mu_1) + \frac{c}{1+c}\phi(\mu_2)\right).
    \]
    Then, we get 
    \begin{align*}
    L(k, \mu)
    &\le \sup_{\substack{\mu_1, \mu_2 \in [0, 1] \\ \mu_1 + c\mu_2 = (c+1)\mu}} \left[ \frac{1}{k}(\mu_1 - \mu_2)^2 + \frac{k-1}{k}\cdot\frac{\alpha}{k-1}\left(\frac{1}{1+c}\phi(\mu_1) + \frac{c}{1+c}\phi(\mu_2)\right)\right] \\
    &\le \sup_{\substack{\mu_1, \mu_2 \in [0, 1] \\ \mu_1 + c\mu_2 = (c+1)\mu}} \left[ \frac{\alpha}{k}\cdot\frac{c}{(c+1)^2}(\mu_1 - \mu_2)^2 + \frac{\alpha}{k}\cdot\left(\frac{1}{1+c}\phi(\mu_1) + \frac{c}{1+c}\phi(\mu_2)\right)\right] \\
    &=  \frac{\alpha}{k} \cdot \sup_{\substack{\mu_1, \mu_2 \in [0, 1] \\ \mu_1 + c\mu_2 = (c+1)\mu}} \left[ \frac{c}{(c+1)^2} (\mu_1 - \mu_2)^2 + \frac{1}{c+1} \phi(\mu_1) + \frac{c}{c+1} \phi(\mu_2) \right] \\
    &= \frac{\alpha}{k}\cdot \phi(\mu), \tag{\Cref{eq:quadratic-identity}}
\end{align*}
where the second step applies $\alpha = \frac{(C+1)^2}{C} \ge \frac{(c+1)^2}{c}$, which follows from $c \in [1/C, C]$. This completes the inductive step and thus the proof.
\end{proof}

\subsection{Proof of \Cref{lemma:merge-easier}}

\lemmamergeeasier*

\begin{proof}
    Let $\calL' = (l'_1, l'_2, \ldots, l'_{m'})$ be a merge of $\calL = (l_1, l_2, \ldots, l_m)$ witnessed by indices $i_1, i_2, \ldots, i_{m'+1}$. Let $n \coloneqq l_1 + l_2 + \cdots + l_m$ and $n' \coloneqq l'_1 + l'_2 + \cdots + l'_{m'}$ denote the sequence lengths in $\calL$ and $\calL'$, respectively.
    
    Suppose that a forecasting algorithm $\calA'$ has a worst-case error of $\eps$ on $\calL'$. Then, the following is a forecasting algorithm for $\calL$, which we denote by $\calA$:
    \begin{itemize}[leftmargin=*]
        \item Let $t_0 \coloneqq l_1 + l_2 + \cdots + l_{i_1 - 1}$. Read the first $t_0$ elements in the sequence and disregard them.
        \item Simulate $\calA'$. Whenever $\calA'$ tries to read the next element in the sequence, read the next element and forward it to $\calA'$.
        \item When $\calA'$ predicts $\hat \mu$ as the average of the next $w$ elements, make the same prediction.
    \end{itemize}

    Clearly, whenever $\calA$ runs on sequence $x \in [0, 1]^n$, it has the same behavior as $\calA'$ running on the length-$n'$ sequence $x' = (x_{t_0 + 1}, x_{t_0 + 2}, \ldots, x_{t_0 + n'})$. Therefore, the expected error of $\calA$ on $x$ is exactly the expected error of $\calA'$ on $x'$, which is further upper bounded by $\eps$.
\end{proof}

\section{Proofs for \Cref{sec:lower}}\label{sec:lower-app}
\subsection{Proof of \Cref{lemma:block-overlap}}
\lemmablockoverlap*

\begin{proof}
    Fix $i_0 \in [m]$, $t = l_1 + l_2 + \cdots + l_{i_0-1}$ and $w \in [n - t]$. Recall that the $i$-th block is defined as $B_i \coloneqq \{l_1 + l_2 + \cdots + l_{i-1} + j : j \in [l_i]\}$. Let $j_0$ be the index of the last block that intersects $[t + 1, t + w]$. Formally, $j_0$ is the smallest number in $\{i_0, i_0 + 1, \ldots, m\}$ such that $l_{i_0} + l_{i_0 + 1} + \cdots + l_{j_0} \ge w$.

    Let $\delta \coloneqq w - (l_{i_0} + l_{i_0 + 1} + \cdots + l_{j_0-1})$. Note that the prediction window $[t + 1, t + w]$ consists of $j_0 - i_0$ complete blocks ($B_{i_0}$ through $B_{j_0 - 1}$) along with the first $\delta$ timesteps in block $B_{j_0}$. We consider the following two cases, depending on whether $\delta$ exceeds half of the window length $w$:

    \begin{itemize}[leftmargin=*]
    \item \textbf{Case 1: $\delta \ge w/2$.} In this case, block $j_0$ would satisfy the lemma. This is because $j_0 \in \{i_0, i_0 + 1, \ldots, m\}$ and
    \[
        |B_{j_0} \cap [t + 1, t + w]|
    =   \delta
    \ge \frac{w}{2}
    \ge \frac{w}{2\mprime(\calL)},
    \]
    where the last step holds since $\mprime(\calL) \ge 1$ for every $\calL$.

    \item \textbf{Case 2: $\delta < w/2$.} In this case, we have $l_{i_0} + l_{i_0 + 1} + \cdots + l_{j_0 - 1} = w - \delta \ge w/2$. Furthermore, by definition of $\mprime(\calL)$,
    \[
        \mprime(\calL)
    =   \max_{1 \le i \le j \le m}\frac{l_i + l_{i+1} + \cdots + l_j}{\max\{l_i, l_{i+1}, \ldots, l_j\}}
    \ge \frac{l_{i_0} + l_{i_0+1} + \cdots + l_{j_0 - 1}}{\max\{l_{i_0}, l_{i_0+1}, \ldots, l_{j_0 - 1}\}}.
    \]
    It follows that
    \[
        \max\{l_{i_0}, l_{i_0+1}, \ldots, l_{j_0 - 1}\}
    \ge \frac{1}{\mprime(\calL)}(l_{i_0} + l_{i_0+1} + \cdots + l_{j_0 - 1})
    \ge \frac{w}{2\mprime(\calL)}.
    \]
    In particular, there exists $i \in \{i_0, i_0 + 1, \ldots, j_0 - 1\}$ such that
    \[
        |B_i \cap [t+1, t+w]|
    =   l_i
    \ge \frac{w}{2\mprime(\calL)}.
    \]
    \end{itemize}    
\end{proof}

\subsection{Proof of \Cref{lemma:technical}}

\lemmatechnical*

\begin{proof}
Let $u_1$ be the deepest node in the tree such that $\{i, i+1, \ldots, j\} \subseteq \ind(u_1)$. Note that such a node must exist, since the root node $r$ satisfies $\ind(r) = \{1, 2, \ldots, m\} \supseteq \{i, i+1, \ldots, j\}$. Then, we consider two cases, depending on whether $\ind(u_1)$ contains a long block that constitutes more than half of the total length.

\paragraph{Case 1: $\ind(u_1)$ contains a long block.} Let $\istar \in \ind(u_1)$ be the unique index such that $l_{\istar} > \totlen(\ind(u_1)) / 2$. Then, by construction of the tree (\Cref{def:tree-construction}), $u_1$ has a child $u_2$ that is a leaf node corresponding to the $\istar$-th block. We claim that $\istar$ must be in $[i, j]$. Otherwise, by \Cref{def:tree-construction}, $u_1$ has two other children: one corresponding to blocks $\ind(u_1) \cap [1, \istar - 1]$ and the other corresponding to $\ind(u_1) \cap [\istar + 1, m]$. Then, one of these two children must contain the set $\{i, i+1, \ldots, j\}$, contradicting our choice of $u_1$.

Then, $(u_1, u_2)$ would be the desired edge. For the first condition, since $\ind(u_2) = \{\istar\}$ and $\istar \ge i$, we have $\ind(u_2) \cap \{1, 2, \ldots, i - 1\} = \emptyset$. For the second, we have $\totlen(\ind(u_2)) = l_{\istar} > \totlen(\ind(u_1)) / 2 \ge \totlen([i, j]) / 2$. Finally, since $\size(u_1) \ge 2$ and $\size(u_2) = 1$, it holds that $\size(u_2) \le \size(u_1) / 2$.

\paragraph{Case 2: $\ind(u_1)$ has no long blocks.} By \Cref{def:tree-construction}, $u_1$ has two children $u_2$ and $u_3$ such that $\ind(u_1)$ can be partitioned into $\ind(u_2)$ and $\ind(u_3)$. It follows that $\{i, i+1, \ldots, j\}$ is partitioned into $\ind(u_2) \cap [i, j]$ and $\ind(u_3) \cap [i, j]$. In particular, we must have
\[
    \max\{\totlen(\ind(u_2) \cap [i, j]), \totlen(\ind(u_3) \cap [i, j])\} \ge \totlen([i, j]) / 2.
\]
Without loss of generality, we assume that $\totlen(\ind(u_3) \cap [i, j])$ is at least $\totlen([i, j]) / 2$ in the following; the other case can be handled in a symmetric way.

Let $i'$ be the smallest number in $\ind(u_3)$. Then, we have $\ind(u_3) \cap [i, j] = \{i', i'+1, \ldots, j\}$. Let $v_1$ denote the deepest node in the subtree rooted at $u_3$ such that $\{i', i'+1, \ldots, j\} \subseteq \ind(v_1)$.\footnote{Again, such a node must exist, since $u_3$ satisfies $\{i', i'+1, \ldots, j\} \subseteq \ind(u_3)$.} Then, we further consider the following two subcases, depending on whether $\ind(v_1)$ contains a long block (compared to $\totlen(\ind(v_1))$):

\begin{itemize}[leftmargin=*]
    \item \textbf{Case 2a:} There exists $\istar \in \ind(v_1)$ such that $l_{\istar} > \totlen(\ind(v_1)) / 2$. By \Cref{def:tree-construction}, $v_1$ has a child $v_2$ that is a leaf node corresponding to the $\istar$-th block. Furthermore, we claim that $\istar$ must be in $[i', j]$; otherwise, we have $\istar \ge j + 1$, and $v_1$ has another child with an index set that contains $\{i', i'+1, \ldots, \istar - 1\} \supseteq \{i', i'+1, \ldots, j\}$, which contradicts our choice of $v_1$.

    Then, $(v_1, v_2)$ would be the desired edge: $\ind(v_2) = \{\istar\}$ and $\istar \ge i' \ge i$ together give the first condition $\ind(v_2) \cap \{1, 2, \ldots, i-1\} = \emptyset$. For the second condition, we note that $\totlen(\ind(v_2)) = l_{\istar} > \totlen(\ind(v_1)) / 2 \ge \totlen([i', j]) / 2 \ge \totlen([i, j]) / 4$. Finally, the third condition follows from $\size(v_2) = 1 \le \size(v_1) / 2$.
    
    \item \textbf{Case 2b:} Every block in $\ind(v_1)$ has a length of at most $\totlen(\ind(v_1)) / 2$. By \Cref{def:tree-construction}, $v_1$ has a left child $v_2$ that satisfies
    \[
        \totlen(\ind(v_2)) \ge \frac{\totlen(\ind(v_1))}{4}.
    \]
    Since $\ind(v_1)$ contains $\{i', i'+1, \ldots, j\}$, we have
    \[
        \totlen(\ind(v_1))
    \ge \totlen([i', j])
    \ge \frac{\totlen([i, j])}{2}.
    \]
    Combining the above gives $\totlen(\ind(v_2)) \ge \totlen([i, j]) / 8$.

    At this point, the edge $(v_1, v_2)$ already satisfies the first two conditions: For the first, we note that $\ind(v_2) \subseteq \{i', i'+1, \ldots, j\}$ and is thus disjoint from $\{1, 2, \ldots, i-1\}$. For the second, we have already shown that $\totlen(\ind(v_2)) \ge \totlen([i, j]) / 8$. However, the last condition $\size(v_2) \le \size(v_1) / 2$ might not hold in general.
    
    Fortunately, this issue can be resolved via yet another case analysis. If $v_2$ is a leaf node, we immediately have the third condition, as $\size(v_2) = 1 \le \size(v_1) / 2$. If there exists $\istar \in \ind(v_2)$ such that $l_{\istar} > \totlen(\ind(v_2)) / 2$, $v_2$ would have a child $v_3$ that is a leaf corresponding to the $\istar$-th block. In this case, $(v_2, v_3)$ would be the desired edge, since $\totlen(\ind(v_3)) > \totlen(\ind(v_2)) / 2 \ge \totlen([i, j]) / 16$ and $\size(v_3) = 1 \le \size(v_2) / 2$. Finally, if $\ind(v_2)$ does not contain a long block, $v_2$ must have two children $v_3$ and $v_4$ such that
    \[
        \size(v_3) + \size(v_4) = \size(v_2)
    \quad\text{and}\quad
        \min\left\{\totlen(\ind(v_3)), \totlen(\ind(v_4))\right\} \ge \frac{\totlen(\ind(v_2))}{4}.
    \]
    Without loss of generality, assume that $\size(v_3) \le \size(v_2) / 2$. Then, $(v_2, v_3)$ gives the desired edge, since $\totlen(\ind(v_3)) \ge \totlen(\ind(v_2)) / 4 \ge \totlen([i,j]) / 32$.
\end{itemize}
\end{proof}

\section{Instance with $O(1 / \mprime(\calL))$ Worst-Case Error}
\label{sec:separation}
\begin{proposition}
\label{prop:separation}
    For every $k \ge 2$, there is a PLS instance $\calL$ such that: (1) $\mprime(\calL) = 2k$; (2) There is a forecasting algorithm with an $O(1/k)$ worst-case error on $\calL$.
\end{proposition}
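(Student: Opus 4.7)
}
My plan is to exhibit an explicit hierarchical PLS instance $\calL_k$ together with a recursive forecasting algorithm that jointly realize the claimed separation. I would set $\calL_1 = (1, 1)$ and recursively define $\calL_k = \calL_{k-1} \circ (M_k) \circ \calL_{k-1}$, choosing the separator length $M_k = 2 n_{k-1}/(2k - 1)$ (with $n_{k-1}$ denoting the total length of $\calL_{k-1}$) so that the full-sequence sum-over-max ratio equals exactly $2k$. A routine induction on $k$ then shows $\mprime(\calL_k) = 2k$: the full-sequence ratio is $2k$ by construction; any subsequence entirely inside a copy of $\calL_{k-1}$ has ratio at most $2(k-1) < 2k$ by the inductive hypothesis; and any intermediate subsequence containing $M_k$ has ratio at most $k + 1/2$. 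Crucially, this instance has $m_k = 2 m_{k-1} + 1 = \Theta(2^k)$ blocks, far exceeding $\mprime(\calL_k) = 2k$.

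For the algorithm, I would first observe that directly applying \Cref{thm:upper} only yields $O(1/\log \mprime(\calL_k)) = O(1/\log k)$, since \Cref{lemma:merge} produces only $\Theta(k)$ approximately uniform merged blocks and discards the hierarchy. To reach $O(1/k)$, I would design a Drucker-style recursive algorithm that exploits the $k$-level hierarchy directly: at each level, perform a length-balanced split of the current sub-instance, and with probability $1/k$ make a crossing prediction between the two halves, while with the remaining probability recurse into one of the halves (chosen in proportion to its length). Because length-balancing keeps the mass ratio $c$ between the two halves bounded by a universal constant (close to $1$ at the top level, and similarly controlled throughout the recursion), the effective approximate-uniformity factor $\alpha = (1 + c)^2 / c$ stays $O(1)$ at every level.

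I would then mirror the potential-function argument in the proof of \Cref{prop:uniform}, establishing by induction on $k$ that the worst-case expected squared loss of the algorithm on $\calL_k$ with target mean $\mu$ is at most $\alpha \cdot \mu(1 - \mu)/k$. The inductive step uses the quadratic identity $\phi(\mu) = \frac{1}{1 + c}\phi(\mu_1) + \frac{c}{1 + c}\phi(\mu_2) + \frac{c}{(1 + c)^2}(\mu_1 - \mu_2)^2$ to absorb the crossing-prediction error into the reduction of $\phi$ across the recursive halves, just as in the last display of the proof of \Cref{prop:uniform}. The main technical obstacle is that length-balanced splits may fall inside a sub-instance rather than at its natural recursion boundary, which forces the recursion to operate over the broader family of all contiguous subsequences of $\calL_k$. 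Verifying that every such subsequence admits a length-balanced split with $c = O(1)$---even when it contains one dominant separator block---will require a careful case analysis driven by the relative size of the largest block in each subsequence.
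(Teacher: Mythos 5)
Your overall strategy mirrors the paper's: a Cantor-like recursive instance $\calL_k = \calL_{k-1}\circ(M_k)\circ\calL_{k-1}$ with a dominant middle block occupying a $\Theta(1/k)$ fraction of the total length, forcing $\mprime = 2k$; paired with a Drucker-style recursive forecaster that at each level either makes a crossing prediction across the two halves or recurses into one of them. The paper instantiates this with $\calL_1 = (1)^{2k}$ and $\calL_h = ((k-1)\calL_{h-1})\circ(2(2k)^{h-1})\circ((k-1)\calL_{h-1})$, while you take $\calL_1 = (1,1)$ and $M_k = 2n_{k-1}/(2k-1)$; both choices make the two halves identical, which is what the algorithm needs.

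However, there is one genuine gap. You specify a \emph{constant} crossing probability of $1/k$ at every level of the recursion. That does not give $O(1/k)$ error. Unrolling the recursion, the crossing happens at the deepest level with probability $(1-1/k)^{k-1}\approx 1/e=\Theta(1)$, and an adversary can concentrate all the variance there: make every scaled copy of $\calL_1$ inside $\calL_k$ equal $(0,1)$. Then at every level $h\ge 2$ the two halves have identical averages (so the crossing error is $0$), but the base-level crossing incurs squared error $1$, yielding total expected error $\Theta(1)$. The paper instead uses crossing probability $1/h$ at recursion depth $h$; this makes every level equally likely (probability exactly $1/k$ each) and is what lets the potential-function telescope give $L(h,\mu)\le\frac{4}{h}\phi(\mu)+\frac{4}{k}$. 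Relatedly, your claimed inductive invariant $L\le\alpha\phi(\mu)/k$ cannot close the induction: the constant-$1/k$ recurrence forces the coefficient of $\phi$ to decay geometrically as $(1-1/k)^{h-1}\cdot 4$, staying $\Theta(1)$ for $h\le k$. You also omit the additive $O(1/k)$ bias term that arises because the middle block shifts $\frac{\mu_1+\mu_2}{2}$ away from $\mu$ by up to $\Theta(1/k)$; this must appear in the invariant (as in the paper's ``$+\frac{4}{k}$'') even though it is harmless for the final $O(1/k)$ conclusion.

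A few smaller issues. Your block lengths are non-integers (e.g., $M_2 = 4/3$); this is repairable by a global rescaling, but should be stated. Your claim that a subsequence containing $M_k$ has ratio at most $k+\tfrac12$ is false---the full sequence itself contains $M_k$ and has ratio exactly $2k$---though the correct and sufficient observation (any subsequence containing $M_k$ has max $=M_k$ and sum $\le n_k$, hence ratio $\le 2k$) still proves $\mprime(\calL_k)=2k$. Finally, the ``main technical obstacle'' you anticipate, that a length-balanced split may fall inside a sub-instance and force you to analyze arbitrary contiguous subsequences, is a red herring: in both your construction and the paper's, the two halves at every recursion level are literally identical, so the natural split always yields $c=1$, and the recursion never leaves the hierarchy. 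You should drop that detour and instead fix the crossing probability to $1/(\text{remaining depth})$.
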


\begin{proof}
    Fix $k \ge 2$. We construct a sequence of instances $\calL_1, \calL_2, \ldots$ as follows:
    \begin{itemize}
        \item $\calL_1 = (1, 1, \ldots, 1)$ is the all-one sequence of length $2k$.
        \item For every $h \ge 2$, we set
        \[
            \calL_h \coloneqq ((k-1)\calL_{h-1}) \circ (2 \cdot (2k)^{h-1}) \circ ((k-1)\calL_{h-1}),
        \]
        where $\circ$ denotes sequence concatenation, and $(k-1)\calL_{h-1}$ denotes the sequence obtained from $\calL_{j-1}$ by multiplying every entry by $k-1$.
    \end{itemize}
    By a simple induction on $h$, the sum of entries in each $\calL_h$ is given by $n_h = (2k)^h$: When $h = 1$, we indeed have $n_1 = 2k = (2k)^1$. For each $h \ge 2$, assuming $n_{h-1} = (2k)^{h-1}$, we have
    \[
        n_h = 2(k-1)\cdot n_{h-1} + 2\cdot(2k)^{h-1} = (2k)^h.
    \]
    Therefore, for each $h \ge 2$, the middle entry in $\calL_h$, $2 \cdot (2k)^{h-1}$, constitutes a $(1/k)$-fraction of the entire sequence length $n_h$. Before and after this middle entry are two copies of $\calL_{h-1}$ scaled up by a factor of $k-1$.

    In the rest of the proof, we show that $\mprime(\calL_h) = 2k$ for every $h \ge 1$. Furthermore, for all sufficiently large $h$, $\calL_h$ admits a forecaster with an $O(1/k)$ worst-case error. These two claims would then prove the proposition.

    \paragraph{Analyze the approximate uniformity.} We start with the direction that $\mprime(\calL_h) \ge 2k$. By construction, the first $2k$ entries of $\calL_h$, $(l_1, l_2, \ldots, l_{2k})$, are exactly given by $(k-1)^{h-1}$ times $\calL_1$. Thus, the definition of $\mprime$ gives
    \[
        \mprime(\calL_h)
    \ge \frac{l_1 + l_2 + \cdots + l_{2k}}{\max\{l_1, l_2, \ldots, l_{2k}\}}
    =   2k.
    \]
    
    For the other direction, we show that $\mprime(\calL_h) \le 2k$ by induction on $h$. When $h = 1$, we clearly have $\mprime(\calL_1) = 2k$. Assuming $\mprime(\calL_{h-1}) \le 2k$, we analyze $\calL_h$. Consider an arbitrary contiguous subsequence $(l_i, l_{i+1}, \ldots, l_j)$ in $\calL_h$. If the subsequence contains the middle entry $2\cdot(2k)^{h-1}$, we would have
    \[
        \frac{l_i + l_{i+1} + \cdots + l_j}{\max\{l_i, l_{i+1}, \ldots, l_j\}}
    \le \frac{n_h}{2\cdot(2k)^{h-1}}
    =   \frac{(2k)^{h}}{2\cdot(2k)^{h-1}}
    =   k
    \le 2k.
    \]
    Otherwise, $(l_i, l_{i+1}, \ldots, l_j)$ must be a contiguous subsequence of $\calL_{h-1}$ scaled up by a factor of $k - 1$. It then follows from the induction hypothesis that
    \[
        \frac{l_i + l_{i+1} + \cdots + l_j}{\max\{l_i, l_{i+1}, \ldots, l_j\}}
    \le \mprime(\calL_{h-1})
    \le 2k.
    \]
    Therefore, $\mprime(\calL_h) = 2k$ holds for every $h \ge 1$.

    \paragraph{Upper bound the worst-case error.} Next, we give a forecasting algorithm for $\calL_h$, which is similar to both the algorithms of~\cite{Drucker13,QV19} as well as our \Cref{alg:uniform_blocks}:
    \begin{itemize}[leftmargin=*]
        \item If $h = 1$, there are $2k$ blocks of equal length. Predict that the average of the last $k$ blocks is the same as that of the first $k$ blocks.
        \item If $h > 1$, $\calL_h$ consists of three parts: the left half, the middle entry $2\cdot(2k)^{h-1}$, and the right half. With probability $1/h$, predict that the average of the right half is the same as that of the left half. With the remaining probability $1 - 1/h$, run the same algorithm recursively on either the left or the right half, with equal probability.
    \end{itemize}

    For each $h \ge 1$ and $\mu \in [0, 1]$, let $L(h, \mu)$ denote the highest possible squared error that the algorithm above incurs on instance $\calL_h$ when the sequence has an average of $\mu$. We will prove by induction that
    \[
        L(h, \mu) \le \frac{4}{h}\cdot\phi(\mu) + \frac{4}{k},
    \]
    where $\phi(x) = x(1-x)$.
    It then follows that, for every $h \ge k$, there is a forecasting algorithm for $\calL_h$ with a worst-case error of $O(1/k) = O(1 / \mprime(\calL_h))$.

    \paragraph{Base case.} For the base case that $h = 1$, let $\mu_1$ and $\mu_2$ be the averages of the two halves, respectively. Clearly, $\mu_1 + \mu_2 = 2\mu$ and the algorithm incurs a squared error of $(\mu_1 - \mu_2)^2$. It follows that
    \[
        L(1, \mu)
    \le \sup_{\mu_1, \mu_2 \in [0, 1]\atop\mu_1 + \mu_2 = 2\mu}(\mu_1 - \mu_2)^2.
    \]
    We note the identity
    \begin{equation}\label{eq:quadratic-identity-simple}
        \phi\left(\frac{a+b}{2}\right) = \frac{\phi(a) + \phi(b)}{2} + \frac{(a-b)^2}{4},
    \end{equation}
    which is a special case of \Cref{eq:quadratic-identity} when $c = 1$. Therefore, for any $\mu_1, \mu_2 \in [0, 1]$ that satisfy $\mu_1 + \mu_2 = 2\mu$, we have
    \[
        (\mu_1 - \mu_2)^2
    =   4\phi\left(\frac{\mu_1 + \mu_2}{2}\right) - 2[\phi(\mu_1) + \phi(\mu_2)]
    \le 4\phi(\mu).
    \]
    Thus, we verified the base case that
    \[
        L(1, \mu) \le 4\phi(\mu) \le \frac{4}{1}\cdot\phi(\mu) + \frac{4}{k}.
    \]

    \paragraph{Inductive step.} Consider $h \ge 2$ and assume that the induction hypothesis holds for $L(h-1, \mu)$. Recall that $\calL_h$ consists of a left half, a middle block that constitutes a $(1/k)$-fraction of the total length, and a right half. Moreover, the two halves are scaled copies of $\calL_{h-1}$. Let $\mu_1$ and $\mu_2$ denote the averages of the two halves, respectively. Let $\mu_0$ denote the average of the middle block. Then, we have
    \[
        \mu = \frac{k-1}{k}\cdot\frac{\mu_1 + \mu_2}{2} + \frac{1}{k}\cdot\mu_0.
    \]
    It follows that
    \[
        \left|\frac{\mu_1 + \mu_2}{2} - \mu\right|
    =   \left|\frac{(\mu_1 + \mu_2) / 2 - \mu_0}{k}\right|
    \le \frac{1}{k}.
    \]
    Therefore, we have
    \[
        L(h, \mu)
    \le \sup_{\mu_1, \mu_2 \in [0, 1]\atop|\mu_1 + \mu_2 - 2\mu| \le 2/k}\left[\frac{(\mu_1 - \mu_2)^2}{h} + \frac{h-1}{h}\cdot\frac{L(h-1, \mu_1) + L(h-1, \mu_2)}{2}\right].
    \]
    Plugging the induction hypothesis $L(h-1, \mu)\le \frac{4}{h-1}\cdot\phi(\mu) + \frac{4}{k}$ into the above gives
    \[
        L(h, \mu)
    \le \sup_{\mu_1, \mu_2 \in [0, 1]\atop|\mu_1 + \mu_2 - 2\mu| \le 2/k}\left[\frac{(\mu_1 - \mu_2)^2}{h} + \frac{4}{h}\cdot\frac{\phi(\mu_1) + \phi(\mu_2)}{2}\right] + \frac{h-1}{h}\cdot\frac{4}{k}.
    \]
    Applying \Cref{eq:quadratic-identity-simple} to the supremum above shows that
    \[
        L(h, \mu)
    \le \frac{4}{h} \cdot \sup_{\mu_1, \mu_2 \in [0, 1]\atop|\mu_1 + \mu_2 - 2\mu| \le 2/k}\phi\left(\frac{\mu_1 + \mu_2}{2}\right) + \frac{h-1}{h}\cdot\frac{4}{k}.
    \]
    Finally, since $\phi(x) = x(1-x)$ is $1$-Lipschitz on $[0, 1]$, the constraint $|\mu_1 + \mu_2 - 2\mu| \le 2/k$ implies that $\phi\left(\frac{\mu_1 + \mu_2}{2}\right)$ is $(1/k)$-close to $\phi(\mu)$. Therefore, we conclude that
    \[
        L(h, \mu)
    \le \frac{4}{h}\cdot\left[\phi(\mu) + \frac{1}{k}\right] + \frac{h-1}{h}\cdot \frac{4}{k}
    =   \frac{4}{h}\cdot\phi(\mu) + \frac{4}{k}.
    \]
    This completes the inductive step and thus finishes the proof.
\end{proof}

\end{document}